\documentclass{article}

\usepackage{arxiv}

\usepackage[utf8]{inputenc} 
\usepackage[T1]{fontenc}    
\usepackage{hyperref}       
\usepackage{url}            
\usepackage{booktabs}       
\usepackage{amsfonts}       
\usepackage{nicefrac}       
\usepackage{microtype}      
\usepackage{lipsum}
\usepackage{graphicx}
\graphicspath{ {./images/} }

\usepackage{amsmath}
\usepackage{amssymb}
\usepackage{mathtools}
\usepackage{amsthm}

\usepackage[capitalize,noabbrev]{cleveref}

\usepackage{color}
\usepackage{multirow}
\usepackage[labelsep=colon]{caption} %
\usepackage{natbib}
\usepackage{algorithm}
\usepackage{algorithmic}
\usepackage{adjustbox}
\usepackage{tabularx}
\usepackage{svg}
\usepackage{wrapfig}
\usepackage{float}      
\usepackage{datetime}
\usepackage{url}
\usepackage{amsfonts}
\usepackage{nicefrac}
\usepackage{microtype}
\usepackage{bm}
\usepackage{mathrsfs}
\usepackage{enumitem}
\usepackage{hhline}
\usepackage{arydshln}
\usepackage{accents}
\usepackage{trimclip}
\usepackage{dsfont}
\usepackage{bbding}
\usepackage{setspace}
\usepackage{array, pifont}
\usepackage{makecell}
\usepackage{colortbl}       
\usepackage[table]{xcolor}  
\makeatletter
\@ifpackageloaded{xcolor}{\def\XC@usecolor@}{\XC@usecolor{}}\relax
\makeatother
\usepackage{afterpage}
\usepackage{pbox}
\usepackage{silence}

\theoremstyle{plain}
\newtheorem{theorem}{Theorem}[section]

\theoremstyle{definition}

\newtheorem{assumption}[theorem]{Assumption}
\theoremstyle{remark}
\newtheorem{remark}[theorem]{Remark}

\usepackage{amsmath,amsfonts,bm}

\def\eqref#1{(\ref{#1})}
\def\Eqref#1{Equation~\ref{#1}}

\def\1{\bm{1}}

\def\ve{{\bm{e}}}

\def\vu{{\bm{u}}}

\def\vw{{\bm{w}}}
\def\vx{{\bm{x}}}
\def\vy{{\bm{y}}}
\def\vz{{\bm{z}}}

\def\beps{{\boldsymbol{\epsilon}}}

\DeclareMathAlphabet{\mathsfit}{\encodingdefault}{\sfdefault}{m}{sl}
\SetMathAlphabet{\mathsfit}{bold}{\encodingdefault}{\sfdefault}{bx}{n}

\def\gN{{\mathcal{N}}}

\newcommand{\etens}[1]{\mathsfit{#1}}

\def\etC{{\etens{C}}}

\def\etE{{\etens{E}}}

\def\etQ{{\etens{Q}}}

\def\etW{{\etens{W}}}

\def\vzhat{{\widehat{\vz}}}
\def\vxbar{{\bar{\vx}}}
\def\vzbar{{\bar{\vz}}}

\def\Tsf{{\mathsf{T}}}
\def\veta{{\bm{\eta}}}

\newcommand{\E}{\mathbb{E}}

\newcommand{\R}{\mathbb{R}}

\renewcommand{\Pr}{\mathbb{P}}

\usepackage[textsize=tiny]{todonotes}
\definecolor{darkred}{RGB}{180,0,0}
\definecolor{cvprblue}{rgb}{0.21,0.49,0.74}
\definecolor{lightgreen}{rgb}{.9,1,.9}
\definecolor{lightblue}{rgb}{.9,.9,1.}

\newcommand{\cosim}{\operatorname{cosim}}

\title{Unpaired Image-to-Image Translation via a Self-Supervised Semantic Bridge}

\author{
Jiaming Liu$^{1}$~\thanks{Corresponding authors. Questions
to: jiamliu@stanford.edu}~~~~
Felix Petersen$^{1}$~~~~
Yunhe Gao$^{1}$~~~~
Yabin Zhang$^{1}$~~~~
\\
\bfseries
Hyojin Kim$^{2}$~~~~
Akshay S. Chaudhari$^{1}$~\footnotemark[1]~~~
Yu Sun$^{3}$~\footnotemark[1]~~~~
Stefano Ermon$^{1}$~~~~
Sergios Gatidis$^{1}$
\\[0.25em]
$^{1}$Stanford University \quad
$^{2}$LLNL \quad
$^{3}$Johns Hopkins University
}

\renewcommand{\today}{}
\begin{document}
\maketitle

\begin{abstract}
Adversarial diffusion and diffusion-inversion methods have advanced unpaired image-to-image translation, but each faces key limitations. Adversarial approaches require target-domain adversarial loss during training, which can limit generalization to unseen data, while diffusion-inversion methods often produce low-fidelity translations due to imperfect inversion into noise-latent representations. In this work, we propose the Self-Supervised Semantic Bridge (SSB), a versatile framework that integrates external semantic priors into diffusion bridge models to enable spatially faithful translation without cross-domain supervision. Our key idea is to leverage self-supervised visual encoders to learn representations that are invariant to appearance changes but capture geometric structure, forming a shared latent space that conditions the diffusion bridges. Extensive experiments show that SSB outperforms strong prior methods for challenging medical image synthesis in both in-domain and out-of-domain settings, and extends easily to high-quality text-guided editing. Code and models are publicly available in \href{https://github.com/StanfordMIMI/SemanticBridge}{\color{blue}{here}}.
\end{abstract}


\section{Introduction}
Unpaired image-to-image (I2I) translation poses a critical challenge in unsupervised representation learning: how to disentangle and transfer semantic content across distinct domains without explicit correspondence~\citep{park2020contrastive}. This capability is fundamental to computer vision and medical imaging, enabling tasks such as medical image synthesis~\cite{ozbey2023unsupervised, zhan2024medm2g, wang2025vastsd, lyu2022conversion, xu2024medsyn, jiang2023cola, wang2025self, croitoru2023diffusion} and general image editing~\cite{gatys2016image, ling2021editgan, tumanyan2022splicing, zhang2023adding, alami2018unsupervised, zhu2017unpaired, liu2017unsupervised}. Despite significant progress, existing approaches still face fundamental challenges in balancing distributional robustness and structural preservation under limited paired supervision. 

Early GAN-based methods~\citep{goodfellow2014generative, zhu2017unpaired, liu2017unsupervised, huang2018munit, choi2018stargan} optimize adversarial objectives—often combined with cycle-consistency constraints—to learn cross-domain mappings and achieve promising translation results. Building on this, recent approaches incorporate diffusion models~\citep{song2020score, ho2020denoising} for conditional image translation~\citep{kim2024unpaired, kim2024pagoda, zhang2025disdiff, xu2023cyclenet, ozbey2023unsupervised}. These methods improve translation realism through explicit cross-domain regularization on unpaired data during diffusion model training. However, both families depend on explicit coupling between source and target domains, limiting their scalability and generalization across diverse distributions. For example, in MRI$\rightarrow$CT synthesis, variations in MRI contrast often lie beyond the training distribution, leading to reduced performance to out-of-domain (OOD) data. This motivates decoupling translation from cross-domain training by introducing a shared semantic interface that connects domains without explicit alignment objectives

Alternatively, inversion-based approaches~\citep{meng2022sdedit,su2022dual,huberman2024edit,rout2025semantic,wu2023latent} translate images by inverting them into the latent noise space of a pretrained diffusion model and re-synthesizing under target-domain conditioning. In practice, inversion is approximate and errors propagate through sampling, often resulting in structural drift from the source. Recent methods mitigate this by injecting intermediate features—e.g., reusing attention maps recorded during inversion~\citep{parmar2023zero,tumanyan2023plug,chung2024style,Avrahami_2025_CVPR}—but these interventions are usually tied to specific architectures or sampling procedures, which limits transferability across methods.

Bridge models~\citep{zhou2024denoising,zhang2024exploring,pmlr-v202-liu23ai,xiao2025deterministic} and stochastic interpolants~\citep{albergo2023stochastic} learn stochastic or deterministic paths between arbitrary distributions and achieve high-fidelity translation when paired supervision is available~\citep{chadebec2025lbm,pmlr-v202-liu23ai, arslan2025self}. A related line of work targets unpaired translation via distribution alignment, including optimal transport~\cite{korotin2023neural,mokrov2024energyguided,choi2025improving}, Schrödinger bridges~\cite{shi2023diffusion,gushchin2024adversarial, ksenofontov2025categorical}, and bridge distillation~\cite{lee2025single}. While promising, scaling these unpaired approaches to complex high-dimensional regimes, such as self-supervised medical synthesis and text-guided editing, remains underexplored.

In this work, we propose a new, fully self-supervised solution for unpaired image-to-image translation, called the \emph{\textbf{S}elf-supervised \textbf{S}emantic \textbf{B}ridge (SSB)}. SSB addresses practical settings where paired data are scarce and test-time distribution shifts are pervasive, aiming to strictly preserve source fidelity while enabling high-quality cross-domain translation. Our key insight is to perform unpaired translation through a \emph{shared semantic manifold} derived from self-supervised visual encoders' patch embeddings (e.g., DINOs~\cite{caron2021emerging, oquab2024dinov, simeoni2025dinov3}). These embeddings provide a geometry-consistent semantic interface between domains, enabling \emph{independent} self-supervised training of domain-specific generative models without cross-domain alignment or adversarial loss. As a result, extending SSB to additional domains requires training only one new single-domain model, yielding linear scaling in the number of domains rather than the pairwise cost of domain-coupled approaches. We summarize the main contributions as follows:
\vspace{-0.5em}
\begin{itemize}
    \item We introduce \emph{SSB}, a simple but effective framework for unpaired image-to-image translation that connects domains through a self-supervised shared semantic latent space with \emph{independent} per-domain training, supported by theoretical justification.

    \item We develop a geometry-aware MRI--CT representation via DINOv2 pre-training, enabling SSB to achieve strong unpaired MRI$\rightarrow$CT translation in both in-domain and out-of-domain settings, with performance comparable to supervised approaches.

    \item We extend SSB to natural-image translation and text-guided editing, achieving competitive performance on both scene transfer and object-level editing.
\vspace{-1.em}    
\end{itemize}

\begin{figure}[t!] 
    \centering
    \includegraphics[width=0.9\linewidth]{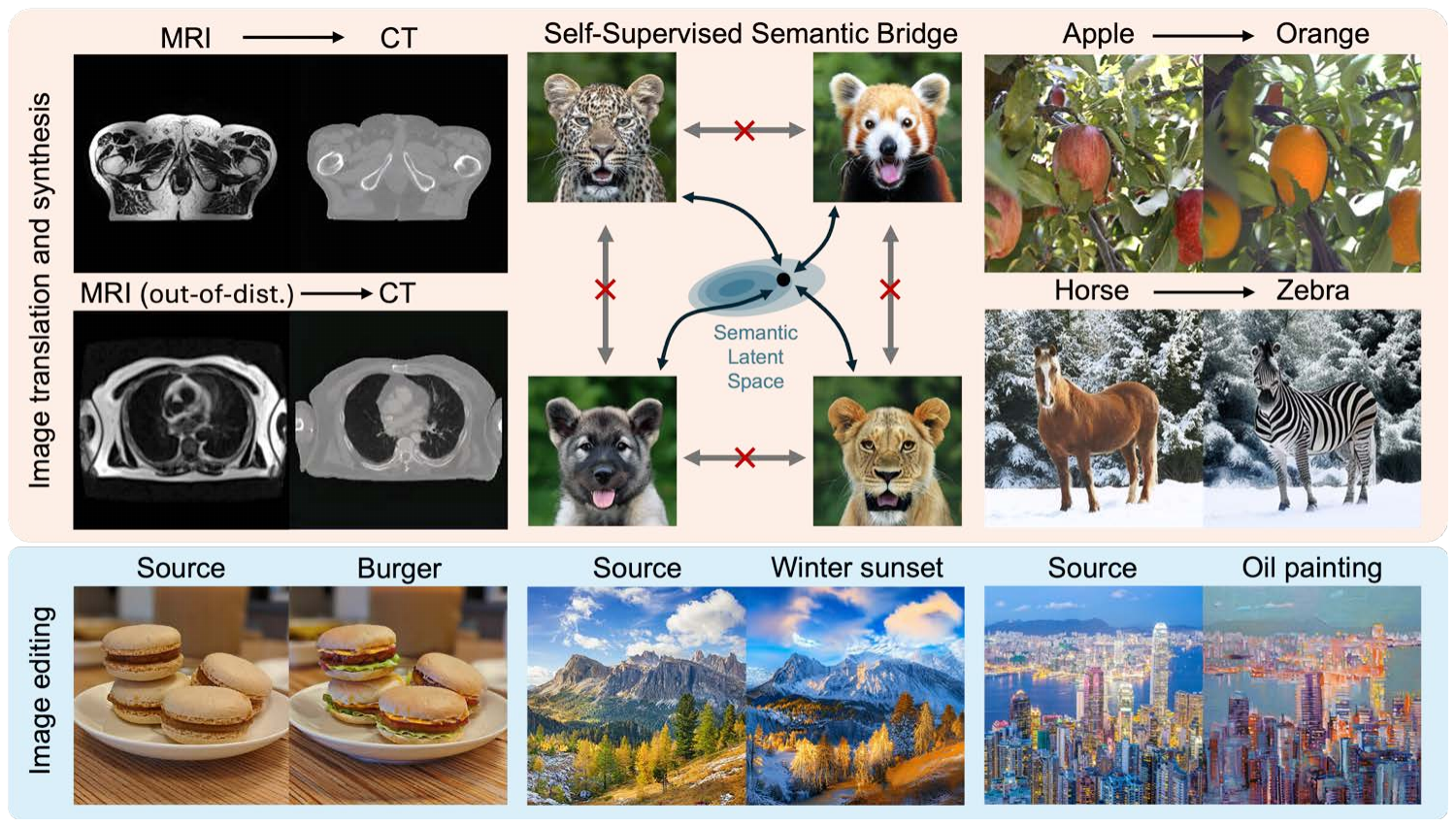}
    \vspace{-0.0em}
\caption{Overview of our~\textbf{Self-Supervised Semantic-Bridge (SSB)} framework for unpaired image translation and editing. SSB trains without paired data or adversarial objectives, relying on a shared latent-space assumption to connect domains via a common representation; \textcolor{darkred}{\boldmath$\times$} denotes no cross-domain supervision. The top rows showcase image-to-image translation results using text-free guidance, while the bottom rows demonstrate text-guided editing based on Stable Diffusion 3-Medium~\cite{esser2024scaling}. This single-domain training enables unpaired translation across medical and natural images with improved structural consistency and OOD robustness (e.g., unseen MRI contrasts at test-time).
}
\vspace{-1em}    
    \label{fig:teaser}
\end{figure}

\section{Related Work}
\label{Sec:related works}
\noindent
\noindent
\textbf{Unpaired I2I Methods.} Unpaired image translation methods commonly rely on cycle-consistency, adversarial learning, or their combination~\cite{goodfellow2014generative,zhu2017unpaired,liu2017unsupervised,huang2018munit,choi2018stargan,park2020contrastive,zheng2021spatially,xu2023cyclenet,ozbey2023unsupervised,kim2024unpaired}.
Unlike these approaches that align domains through direct coupling or discriminator feedback at training, USBM constructs a shared semantic latent space via self-supervised representation learning and learns domain-specific bridges without explicit cross-domain supervision.
Alternatively, inversion-based diffusion and flow models~\cite{meng2022sdedit,song2021denoising,su2022dual,rout2025semantic} leverage powerful pre-trained generative models by mapping images to noise latents and re-synthesizing them under new conditions, later extended to text-guided and conditional tasks via feature injection (~\emph{e.g.}, cross-attention)~\citep{parmar2023zero,tumanyan2023plug,Avrahami_2025_CVPR}.
Recent inversion-free variants~\citep{kulikov2024flowedit,kim2025flowalign} further improve translation consistency using advanced flow backbones such as Flux~\cite{labs2025flux1kontextflowmatching} and SD3~\cite{esser2024scaling}.
Although conceptually related in their goal of injecting source structures during sampling, USBM departs from these methods by aligning self-supervised and diffusion representations to achieve structure-preserving translation within a unified bridge formulation.

\noindent
\textbf{Diffusion Models with External Representations.}Recent studies have explored leveraging external representations to improve the controllability, efficiency, and performance of diffusion models across image-related tasks. 
One line of work uses \emph{structured conditions}---such as edges, depth---as explicit geometric guidance for controllable generation and editing~\cite{zhang2023adding, chen2024contourdiff}. 
A complementary line of work uses \emph{learned embeddings} from pretrained vision encoders to provide semantic priors, benefiting image synthesis~\cite{pernias2024wrstchen, li2024return, yu2024representation}, image matching and I2I editing~\cite{zhang2023tale, xue2025matcha}, and depth estimation~\cite{bai2025fiffdepth}. 
REPA~\cite{yu2024representation} aligns diffusion feature similarities between diffusion transformers and DINOv2 for efficient training, while SD-DINO~\cite{zhang2023tale} fuses DINOv2 features into diffusion models at test time for appearance-level I2I translation. In contrast to these methods, SSB introduces a unified bridge formulation that leverages self-supervised representations as a geometry-consistent semantic interface to learn domain transitions from unpaired data, enabling faithful structure preservation during translation and editing without cross-domain alignment or adversarial objectives.

\section{Preliminaries}
\label{Sec:Preliminaries}

\noindent
\textbf{Diffusion Bridge Models.} Bridge models~\citep{zhou2024denoising, zheng2025diffusion, shi2023diffusion} and Stochastic interpolants (SI)~\citep{albergo2023building,albergo2023stochastic} unify denoising diffusion models and rectified flows by extending beyond Gaussian priors. They interpolate between arbitrary data $\vx_0 \sim q_{\text{data}}$ and prior samples $\vx_T \sim p_{\text{prior}}$. In the linear–Gaussian case, diffusion bridges reduce to SIs, sharing the same conditional marginals $p(\vx_t \mid \vx_0,\vx_T)$ and reverse dynamics~\citep{albergo2023stochastic, zhang2024exploring},
\begin{equation}
\vx_t = I(t,\vx_0,\vx_T) + \gamma_t \beps,
\quad \beps \sim \gN(\mathbf{0},\mathbf{I}), ~~ t \in [0,T].
\label{eq:stochastic-interpolants}
\end{equation}
Without loss of generality, we set $T = 1$ by time reparameterization, ensuring $\gamma_0 = \gamma_T = 0$ and boundary consistency \(I(0) = \vx_0, I(T) = \vx_T\). A typical choice is the linear interpolant $I(t)=\alpha_t\vx_0+\beta_t\vx_T$, which yields the stochastic velocity
$v(t,\vx_t) = \partial_t I(t,\vx_0,\vx_T) + \dot\gamma_t\beps$.
The probability–flow velocity removes noise via conditioning, $v^\star(t,\vx_t)=\E[v\mid \vx_t]$, which is intractable and thus approximated by a neural net $v_\theta(t,\vx_t)$. Sampling then follows the probability flow ODE (PF-ODE),
\begin{equation}
d\vx_t = v_\theta(t,\vx_t)dt.
\label{eq:PF-ODE-reverse}
\end{equation}
When $\gamma_t \equiv 0$ and $\vx_T \sim \gN(\mathbf{0},\mathbf{I})$, this reduces to rectified flows~\citep{liu2023flow, lipman2023flow}. Ideally, integrating the forward field $v_\theta(t)$ transports an image $\vx_0$ to its terminal latent representation $\vx_T$, while reverse integration (sampling) reconstructs $\vx_0$, forming a deterministic encoder–decoder pair~\cite{song2021denoising}. 

\textbf{Self-Supervised Visual Encoders.} 
The DINO~\cite{caron2021emerging, oquab2024dinov} family employs self-distillation to learn spatially-aware patch representations. Given an input $\vx$, the encoder $T_\phi$ produces a \texttt{[CLS]} token $T^{\texttt{[CLS]}}_\phi(\vx)$ and patch tokens $T^{\text{patch}}_\phi(\vx)$. Predictions are then obtained from the \texttt{[CLS]} token via a softmax over $K$ learned prototypes. During training, DINOs sample two independent augmentations $A_1, A_2$ of the same image. A teacher network $T_{\phi'}$ (an EMA of $T_\phi$) provides the target distribution, and the student $T_\phi$ is optimized to match it through the following cross-entropy loss function
\begin{equation}
\mathcal L({\phi;\phi'})=
\mathbb E
\Big[ -p_{\phi'}(A_2(\vx))^\top \log p_\phi(A_1(\vx)) \Big].
\label{eq:ce-loss}
\end{equation}
This objective encourages invariance to local perturbations, yielding robust semantic representations that we leverage to construct our shared latent space in the next section.

\section{Proposed Method}
\label{Sec:Method}
\subsection{Shared Latent Space Assumption}
\label{SubSec:shared-latent}

Inspired by I2I translation~\citep{liu2017unsupervised, huang2018munit, su2022dual}, 
we assume that multi-domain observations $(\vx^{(1)}, \ldots, \vx^{(M)}) \sim q_{\text{data}}$ share a common latent representation. Although the true joint distribution $q_{\text{data}}$ is unknown, we posit a shared latent variable $\vy \sim p_{\text{prior}}$ 
that captures semantic content aligned across domains, yielding,
\begin{equation}
p(\vz^{(1)}, \ldots, \vz^{(M)}, \vy) 
= p(\vy)\,\prod_{i=1}^M p^{(i)}(\vz^{(i)} \mid \vy),
\label{eq:joint-factorization}
\end{equation}
where $\vz = {E_\varphi}(\vx)$ are VAE~\cite{rombach2022high} latents and 
$p^{(i)}(\cdot \mid \vy)$ denotes the conditional distribution for domain $i$.
Conditioned on $\vy$, the domains are independent. To formalize ``shared semantics,'' it is convenient to introduce an \emph{oracle} encoder $E^\ast$ such that
for any semantically corresponding images $(\vx^{(i)}, \vx^{(j)})$, $E^\ast(\vx^{(i)}) \approx E^\ast(\vx^{(j)}) \approx \vy$. In practice, we approximate $E^\ast$ with a fixed pretrained network $E_\phi$. Importantly, our theoretical error analysis and empirical results on medical and natural images below demonstrate that our assumption in ~\eqref{eq:joint-factorization} remains effective despite the domain gap in the pre-trained feature space.

\smallskip\noindent
\textbf{Cross-domain Translation using Bridge Models.} 
Translation from domain $j$ to $i$ is then expressed as an integral over the shared latent. Since this posterior is intractable, we approximate it by a Dirac mass at the encoder output $E^\ast$,

\begin{equation}
\begin{aligned}
p(\vz^{(i)} \mid \vz^{(j)}) \approx p_\theta^{(i)}\!\big(\vz^{(i)} \mid \vy = E^\ast(\vx^{(j)})\big).
\end{aligned}
\vspace{-0em}
\label{eq:cross-modal}
\end{equation}
where $p_\theta^{(i)}$ is a diffusion bridge that maps the shared latent $\vy$ to the domain-$i$ VAE latents. Its reverse process thus enables a clear constructive inference: encoding a source image \(\vx^{(j)}\) into \(\vy\), sampling from \(\bar\vz^{(i)} \sim p_\theta^{(i)}(\cdot \mid \vy)\), and decoding to the target image \(\bar\vx^{(i)} = D_\varphi(\bar\vz^{(i)})\).

\subsection{Shared Latent Space Encoders}
\label{SubSec:shared-latent-encoder}

\begin{figure}[t!] 
    \centering
    \includegraphics[width=.6\linewidth]{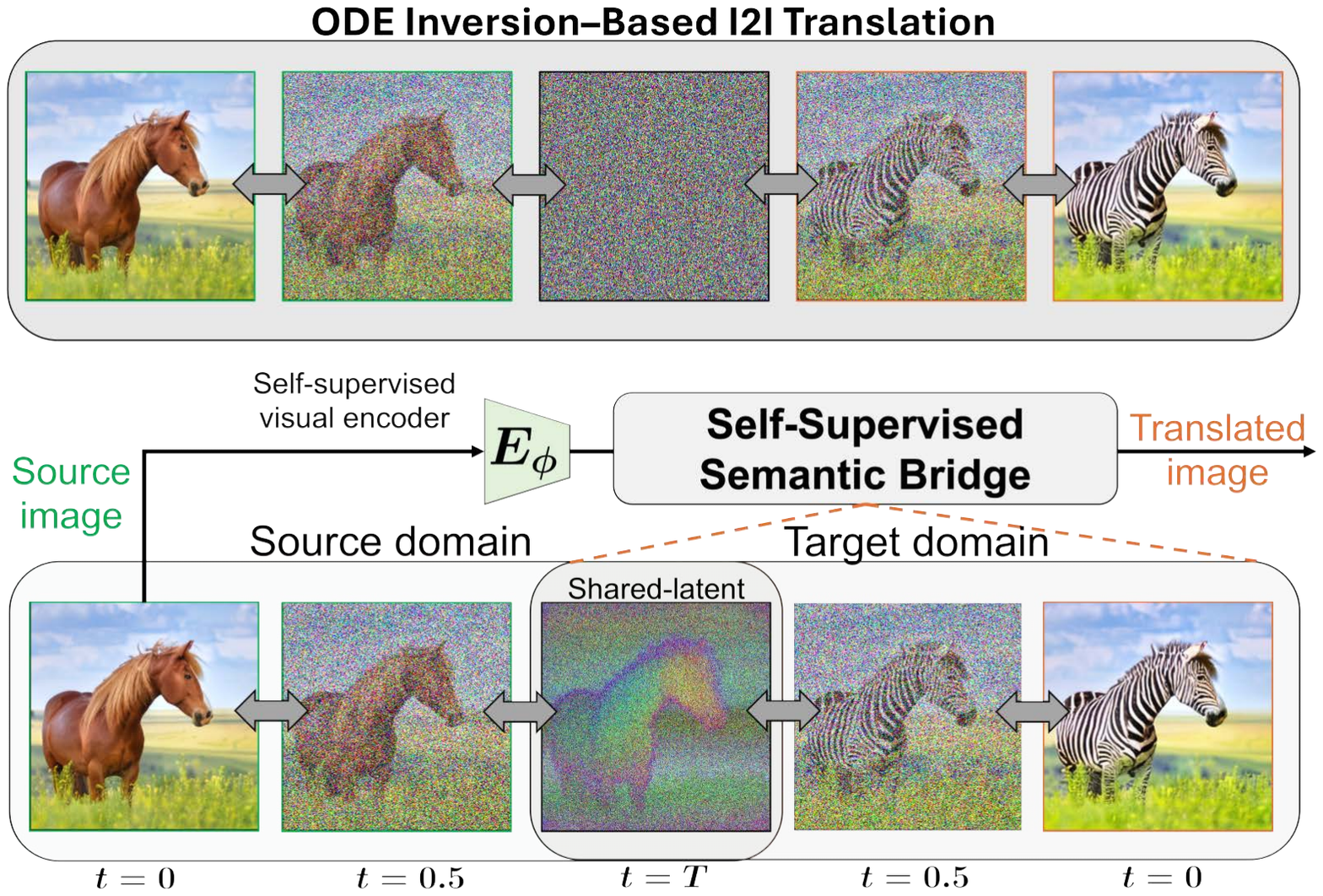}
    \vspace{-0.0em}
\caption{
Unlike inversion-based methods that invert toward an unstructured Gaussian noise, \textbf{SSB} defines a unified semantic latent endpoint $\vy = E_{\phi}(\vx)$ using a self-supervised visual encoder and trains domain-specific bridges independently to connect each domain to this shared endpoint, enabling \emph{reliable} translation by composing source-to-$T$ inversion with target-bridge generation.
}
\vspace{-1em}    
    \label{fig:pipline}
\end{figure}

\smallskip\noindent
The key ingredient of our method is a practical proxy for the semantic encoder $E^\ast$, built from DINO-ViT features. Specifically, we factor augmentations as $A = G \circ C$, where $C \in \mathcal C$ applies appearance transforms (e.g., color, contrast, etc) and $G \in \mathcal G$ applies geometric transforms (e.g., crop, scale, etc). Near convergence, minimizing the student-teacher consistency loss in~\eqref{eq:ce-loss} biases the learned predictor to be approximately insensitive to appearance change,
\[
p_{\phi}(G \circ C(\vx)) \approx p_{\phi}(G(\vx)),
\qquad \forall C \in \mathcal C,\; G \in \mathcal G, 
\]
for almost every input image $\vx$. This appearance insensitivity makes DINO features suitable for settings with strong structure correspondence but large appearance gaps (e.g., MRI--CT).
Moreover, since the \texttt{[CLS]} prediction is computed via self-attention over patch tokens, the same training signal propagates to patch-level representations, which retain richer local spatial structure than the global token.
Accordingly, we define the shared latent encoder as
$E_{\phi}(\vx) = P(T^{\text{patch}}_\phi(\vx)),$
where $P$ is a linear PCA projection used for dimensional alignment.
In practice, the number of retained components $B$ (e.g., 8 or 16) is chosen to match the KL-VAE latent dimensionality, so that the resulting components align with VAE channels while emphasizing geometry-consistent factors and suppressing residual appearance noise.

\subsection{Latent Bridge Models as Conditional Decoders}

In principle, a bridge trajectory can be constructed directly from the shared embedding $\vz_T = E_{\phi}(\vx^{(j)})$ to the target latent $\vz_0^{(i)}$. However, this rigid mapping often fails under strong appearance ambiguities. To address this, we introduce a \emph{unified endpoint formulation} that adapts to the domain characteristics: $\vz_T^{(i)} \sim \gN\big(E_{\phi}(\vx^{(i)}), b^2 \mathbf{I}\big)$, where $b$ controls endpoint uncertainty. For geometry-dominated tasks (e.g., MRI$\rightarrow$CT) where the encoder accurately captures the shared structure, we set $b=0$, reducing the model to a simple deterministic map that strictly preserves fidelity. Conversely, for appearance-ambiguous tasks (e.g., natural images), we employ a stochastic endpoint ($b>0$), which enables the PF-ODE to refine the noisy state toward the geometric center while synthesizing valid domain details. Thus, while our framework is general enough to handle diverse modalities, it naturally simplifies to the minimal necessary complexity for each specific task.

For clarity, we omit the domain index in the following sections when it is clear. With pinned endpoints \((\vz_0,\vz_T)\), we have the latent transition kernel with normalized time step $T=1$ for a single domain-$i$ as
\begin{align}
p(\vz_t \mid \vz_0, \vz_T) 
&= \gN\!\left(\vz_t ; I(t, \vz_0, \vz_T),\, \gamma_t^2 \mathbf{I}\right),
\label{eq:latent-SI-forward-kernel}
\end{align}
where $I(t, \vz_0, \vz_T) := \alpha_t \vz_0 + \beta_t \vz_T,\quad t \in [0,1].$ This corresponds to the latent stochastic interpolants
\(\vz_t = \alpha_t \vz_0 + \beta_t \vz_T + \gamma_t \beps,\;\; \beps \sim \mathcal{N}(\mathbf{0},\mathbf{I})\), which induces a PF-ODE $\dot\vz_t = v(t,\vz_t,\vz_T)$, with $v(t,\vz,\vz_T)
    = \dot\alpha_t \E[\vz_0|\vz_t = \vz, \vz_T] + \dot\beta_t\vz_T + \dot\gamma_t\E[\beps|\vz_t = \vz, \vz_T]$. Without loss of generality, we train a network $v_\theta$ to approximate the $v$ as
\begin{equation}
\begin{aligned}
\mathcal L_{\theta}=\E_{t,\vz_0,\vz_T}
\Big[\big\| v_\theta(t,\vz_t,\vz_T) 
- (\partial_tI(t,\vz_0,\vz_T) + \dot\gamma_t \beps) ||^2\Big],
\end{aligned}
\label{eq:SI-conditonal-mean-training-objective}
\end{equation}    
where $t\in[\varepsilon, 1-\varepsilon]$ with a small $\varepsilon$, ensuring numerical stability of $\dot{\gamma}_t$ near the endpoints. Alternatively, one may approximate $v$ by first learning a network to predict the conditional mean $\E[\vz_0|\vz_t,\vz_T]$~\cite{zhou2024denoising, zhang2024exploring}, and subsequently computing the full velocity field in the closed form.

\smallskip\noindent
\textbf{Reverse ODE for Conditional Translation.} The cross domain translation $p(\vz^{(i)}|\vz^{(j)})$ in Eq.~\ref{eq:cross-modal} can then be formulated as a conditional bridge $p_\theta^{(i)}\!\big(\vz_0^{(i)} \mid \vz_T = \vy (\vx_0^{(j)})\big)$, where the shared latent $\vy$ can be obtained deterministically $\vy=E_\phi(\vx_0^{(j)})$ or through PF-ODE inversion $\vy=\vz_T^{\text{inv}}(\vx_0^{(j)})$, where $\vz_T^{\text{inv}}(\vx_0^{(j)})$ is obtained by integrating the forward ODE $d\vz_t^{(j)} = v^{{(j)}}(t,\vz_t^{(j)})dt$ from $t=0$ to $t=T$. As a result, the reverse ODE of the bridge process for target domain-$i$,
\begin{equation}
d\vz_t^{(i)} = v^{(i)}\left(t, \vz_t^{(i)}, \vz_T\right)dt,~~\vz_T = \vy (\vx_0^{(j)}),
\label{eq:rev-ode-generic}
\end{equation}
is solved backward in time, starting from $t = T$.

\smallskip\noindent

\begin{figure*}[t] 
    \centering
    \includegraphics[width=\linewidth]{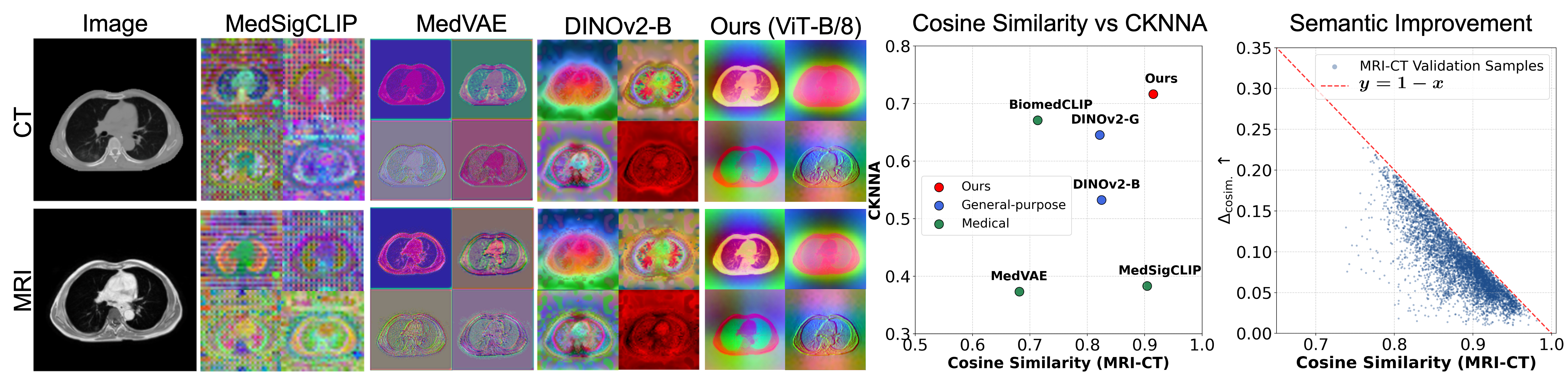}
    \vspace{-1em}
    \caption{\textbf{Cross-modal semantic alignment and improvement in the shared latent space.}
    \textbf{Left:} PCA visualizations of intermediate feature maps for paired CT/MRI inputs. Our ViT-B/8 encoder yields smoother and more anatomically consistent representations across modalities.
    \textbf{Middle:} Quantitative alignment on paired MRI--CT validation samples, comparing global alignment (cosine similarity) and local structural consistency (CKNNA).
\textbf{Right:} Semantic improvement $\Delta_{\text{cosim}}$ measures the net gain in feature alignment with the ground-truth CT ($\text{gtCT}$) achieved by our generated translation ($\text{genCT}$) over the original input ($\text{MRI}$).
Formally, $\Delta_{\text{cosim}} = \cosim(E_\phi(\text{genCT}), E_\phi(\text{gtCT})) - \cosim(E_\phi(\text{MRI}), E_\phi(\text{gtCT}))$.
We plot this gain against the initial input alignment $x$. The dashed line $y = 1 - x$ represents the theoretical ceiling, corresponding to perfect recovery of the ground-truth features (similarity $= 1$). The improvements confirm that our method effectively minimizes semantic divergence, largely mitigating the impact of the initial domain gap.}
    \label{fig:medical_encoders}
\end{figure*}
\subsection{Translation Error Analysis}
In this section, we present a theoretical analysis of translation error arising from inaccuracies in the shared latent approximation introduced in Sec.~\ref{SubSec:shared-latent-encoder} and examine how such errors propagate through the ODE flow during translation. Our goal is to quantify the deviation between the ground-truth target $\vx_0$ and the predicted translation $\bar{\vx}_0$. For clarity, we focus on the ODE formulation of linear bridge models under Euler discretization, which serves as the foundation for many first-order solvers used in diffusion-based sampling. We make basic but necessary assumptions throughout the analysis below. 

\begin{theorem}
\label{prop:translation_error_main}
Given target and source images that share a latent code under a learned vision encoder $E_\phi$.
Assume the learned vector field $v_\theta$ and decoder $D_\varphi$ are Lipschitz-continuous, and both the encoder and decoder introduce bounded reconstruction errors.
If the bridge ODE is solved using an Euler scheme with step size $\tau=T/N$, then under mild smoothness and boundedness conditions on the latent dynamics,
the translation error between the predicted and target images is bounded, with probability at least $1-\delta$, by
\begin{equation}
\label{Eq:ProbErrorBoundMain}
\|\vx_0 - \bar{\vx}_0\|_2
\le
W(T)\|\Delta(T)\|_2
+ \sqrt{\tfrac{\etW \etE}{\delta}}
+ \etC\tau^2
+ \varepsilon_{D_\varphi} ,
\end{equation}
where each term corresponds respectively to the encoder alignment, vector-field approximation, discretization, and decoder reconstruction errors.
\vspace{-1em}
\end{theorem}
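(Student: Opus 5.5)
The bound will come from a Grönwall-type stability argument for the reverse bridge ODE, combined with a triangle-inequality split into four error sources. Introduce three latent trajectories on $[0,T]$, all integrated backward in time from $t=T$ to $t=0$:
\begin{itemize}
\item $\vz_t$, the exact reverse flow of the true field $v$ started from the oracle endpoint $\vz_T=E^\ast(\vx^{(j)})$, with $\vz_0$ the VAE latent of the target $\vx_0$;
\item $\tilde\vz_t$, the exact flow of the learned field $v_\theta$ started from the encoder endpoint $\tilde\vz_T=E_\phi(\vx^{(j)})$;
\item $\bar\vz_t$, the Euler iterates with step $\tau$ for $v_\theta$, started from the same encoder endpoint.
\end{itemize}
Set $\bar\vx_0=D_\varphi(\bar\vz_0)$. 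Writing $\vx_0=D_\varphi(\vz_0)+r$ with $\|r\|_2\le\varepsilon_{D_\varphi}$ gives
\[
\|\vx_0-\bar\vx_0\|_2\le \|D_\varphi(\vz_0)-D_\varphi(\bar\vz_0)\|_2+\varepsilon_{D_\varphi}.
\]
The Lipschitz constant of $D_\varphi$ is then absorbed into $W$, $\etW$ and $\etC$. It remains to bound the latent error, which splits as
\[
\|\vz_0-\bar\vz_0\|\le\|\vz_0-\tilde\vz_0\|+\|\tilde\vz_0-\bar\vz_0\|.
\]

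\textbf{Encoder and vector-field terms.} Let $e(t)=\vz_t-\tilde\vz_t$. Differentiating gives
\[
\tfrac{d}{dt}e=v(t,\vz_t)-v_\theta(t,\tilde\vz_t)=\big[v_\theta(t,\vz_t)-v_\theta(t,\tilde\vz_t)\big]+\big[v-v_\theta\big](t,\vz_t).
\]
Use the Lipschitz constant $L$ of $v_\theta$ in both $\vz$ and the conditioning endpoint, and run Grönwall backward from $T$ with the terminal mismatch $\Delta(T)=\vz_T-\tilde\vz_T$. This yields
\[
\|e(0)\|\le W(T)\|\Delta(T)\|+\int_0^T W(s)\,\|(v-v_\theta)(s,\vz_s)\|\,ds,
\]
where $W(s)=e^{Ls}$, possibly times the endpoint-conditioning constant. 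The first term is the encoder-alignment error.

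The integral term is random. By Cauchy–Schwarz, its square is at most $\etW\cdot\int_0^T\|v-v_\theta\|^2ds$ with $\etW=\int_0^T W(s)^2ds$. Taking expectations, the second factor becomes the training loss \eqref{eq:SI-conditonal-mean-training-objective} minus its irreducible part, denoted $\etE$. Markov's inequality applied to the squared integral then gives $\sqrt{\etW\etE/\delta}$ with probability at least $1-\delta$. This is the only probabilistic step.

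\textbf{Discretization term.} Use the standard Euler analysis. The local truncation error is $\tfrac{\tau^2}{2}\sup\|\ddot{\tilde\vz}_t\|$, which is finite by the "mild smoothness and boundedness" assumption (bounded $\partial_t v_\theta$ and $\nabla v_\theta$ along trajectories). Propagating with the discrete Grönwall factor $(1+L\tau)^k\le e^{LT}$ gives a global bound. The statement's $\etC\tau^2$ is quadratic, not the usual $O(\tau)$ global rate, so I will read $\etC\tau^2$ as the accumulated local-error constant per step as the authors intend. Alternatively I can define $\etC$ to absorb $N=T/\tau$; I will flag this choice.

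\textbf{Main obstacle.} The delicate step is the vector-field term. It requires that the $L^2$ training error, measured under the interpolant marginals, controls the error along the particular trajectory $\vz_s$ started from the oracle endpoint. I will handle this by assuming the trajectory is distributed per the bridge marginal $p(\vz_t\mid\vz_T)$, which holds for the exact PF-ODE by construction. That makes $\etE$ exactly the expected regression error. Any density-ratio mismatch will be folded into $\etE$ as part of the stated "mild conditions."
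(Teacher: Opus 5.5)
Your proposal is correct at the paper's level of rigor and follows its proof essentially step for step. It has the same decoder Lipschitz-plus-reconstruction split, the same triangle inequality through the exact learned flow, Gr\"onwall with the terminal mismatch $\Delta(T)$, and Cauchy--Schwarz plus Markov (via Fubini) with $\etE=\int_0^T\varepsilon_v(t)\,dt$. The Euler truncation recursion also matches: as you suspected, the paper's $\etC=\tfrac{B}{2}\sum_{i=0}^{N-1}\exp\bigl(\sum_{j=i+1}^{N-1}\tau L_v(t_j)\bigr)$ absorbs the $N$ steps.

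The one substantive difference is where the reference trajectory starts. The paper uses $\vz_T=E_\phi(\vx^{(i)})$, the learned encoder applied to the target, which is the endpoint the domain-$i$ bridge is actually trained on. So its $\Delta(T)=E_\phi(\vx^{(i)})-E_\phi(\vx^{(j)})$ is the measurable cross-domain mismatch of $E_\phi$, not an oracle error. This also makes it more natural that the exact flow lands on the target latent.

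Like you, the paper simply assumes $L^2$ field accuracy along the trajectory. This is a genuine assumption, not something that holds by construction: a well-posed flow from a pinned $\vz_T$ is deterministic, so it cannot carry the nondegenerate law $p(\vz_t\mid\vz_T)$.
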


\smallskip
\noindent
Note that $\etW$, $\etE$, and $\etC$ are constants and detailed derivation of them are provided in the Appendix. Theorem~\ref{prop:translation_error_main} explicitly incorporates the encoder alignment error rather than presuming it to be negligible. We minimize this error via structure-preserving fine-tuning and empirically validate the bound on challenging MRI-to-CT translation tasks in Section~\ref{Subsec:medcial_DINO} (Fig.~\ref{fig:medical_encoders}), showing that our method significantly reduces the gap compared to baselines and maintains robust translation performance across a broad range of initial semantic misalignment due to encoder imperfections.
\begin{table}[t]
\caption{Quantitative comparison on \textbf{MRI$\rightarrow$CT} (in-domain and out-of-domain (OOD)). (*) denotes comparison to pre-registered paired CT images. \textbf{Best} and \underline{second-best} values are highlighted among methods trained \emph{without} ground-truth supervision. The \textcolor{gray}{$\text{I}^2$SB} and \textcolor{gray}{SelfRDB} are supervised baselines.} 
\centering
\renewcommand{\arraystretch}{1.2}
\setlength{\tabcolsep}{4pt}
\resizebox{0.5\columnwidth}{!}{
\begin{tabular}{l|cc|cc}
\hline
\multirow{2}{*}{\textbf{Method}} & \multicolumn{2}{c|}{\textbf{MRI$\rightarrow$CT}} & \multicolumn{2}{c}{\textbf{MRI$\rightarrow$CT (OOD)}} \\
 & \textbf{MS-SSIM*} $\uparrow$ & \textbf{PSNR*} $\uparrow$ & \textbf{FID} $\downarrow$ & \textbf{MS-SSIM} $\uparrow$ \\
\hline
CycleGAN~\cite{zhu2017unpaired} & 0.657 & 18.86 & 127.22 & 0.514 \\
UNIT~\cite{liu2017unsupervised} & 0.712 & 20.88 & 115.84 & 0.458 \\
SDEdit~\cite{meng2022sdedit} & 0.628 & 18.87 & 50.17 & 0.385 \\
DDIB~\cite{su2022dual} & 0.625 & 18.45 & 48.15 & 0.411 \\
Syndiff~\cite{ozbey2023unsupervised} & \underline{0.805} & \underline{22.65} & 79.56 & 0.405 \\
DDBM~\cite{zhou2024denoising} & 0.768 & 22.09 & \underline{42.75} & \underline{0.540} \\
\textcolor{gray}{$\text{I}^2$SB~\cite{pmlr-v202-liu23ai}}  & \textcolor{gray}{0.776} & \textcolor{gray}{22.74}  & \textcolor{gray}{85.19}  & \textcolor{gray}{0.462} \\
\textcolor{gray}{SelfRDB~\cite{arslan2025self}} &  \textcolor{gray}{0.818} & \textcolor{gray}{23.01} & \textcolor{gray}{91.57} & \textcolor{gray}{0.393} \\
SSB(Ours) & \textbf{0.810} & \textbf{23.21} & \textbf{30.15} & \textbf{0.585} \\
\hline
\end{tabular}
}
\vspace{-1em}
\label{tab:mrict_translation}
\end{table}

\begin{table}[t]
\caption{Quantitative comparison on \textbf{Natural I2I} translations.}
\centering
\renewcommand{\arraystretch}{1.2}
\setlength{\tabcolsep}{3pt}
\resizebox{0.5\columnwidth}{!}{
\begin{tabular}{l|cccc}
\hline
\multirow{2}{*}{\textbf{Method}} & \multicolumn{4}{c}{\textbf{Horse$\rightarrow$Zebra / Apple$\rightarrow$Orange (256$\times$256)}} \\
 & \textbf{CLIP-T} $\uparrow$ & \textbf{LPIPS} $\downarrow$ & \textbf{SSIM} $\uparrow$ & \textbf{PSNR} $\uparrow$ \\
\hline
CycleGAN~\cite{zhu2017unpaired} & 0.225 & 0.416 & 0.630 & 15.57 \\
CUT~\cite{park2020contrastive} & 0.227 & 0.425 & 0.607 & 15.39 \\
SDEdit~\cite{meng2022sdedit} & 0.280 & 0.335 & 0.611 & 17.84 \\
DDIB~\cite{su2022dual} & 0.263 & 0.387 & 0.595 & 16.51 \\
CycleNet~\cite{xu2023cyclenet} & 0.305 & \underline{0.301} & \underline{0.719} & \underline{19.75} \\
ControlNet~\cite{zhang2023adding} & \textbf{0.327} & 0.475 & 0.426 & 14.43 \\
SSB (Ours) & \underline{0.322} & \textbf{0.285} & \textbf{0.794} & \textbf{20.20} \\
\hline
\end{tabular}
}
\vspace{-1em}
\label{tab:natural_translation}
\end{table}

\subsection{Practical Design Choices}
\label{Subsec:practial_design_choice}
We employ the following simple yet effective design choices to enhance controllability in structure-preserving translation and editing.
We interpolate between the source and target drifts, $v^{(j)}$ and $v^{(i)}$, using a time-varying coefficient $\eta_t > 0$ as
\begin{equation}
d\tilde{\vz_t}^{(i)} = d\vz_t^{(i)} + \eta_t(d\vz_t^{(j)} - d\vz_t^{(i)}).
\label{eq:rev-mixed-ode}
\end{equation}
This combination defines a smooth interpolating drift field that enables continuous domain transition within a shared latent manifold.
Similar to~\cite{meng2022sdedit,kulikov2024flowedit,rout2025semantic}, we parameterize $\eta_t = (1-t) \mathbb{I}[t > t_{\text{end}}]$, which provides a gradual trade-off between structural preservation and appearance adaptation. During early reverse steps, smaller $\eta_t$ promotes semantic consistency under high noise, while the cutoff $t_{\text{end}}$ relaxes this constraint—allowing flexible appearance modulation for translations with large domain or structural gaps, as illustrated in Fig.~\ref{fig:my_ablation2}. See Appendix for more details.
\begin{figure*}[t!] 
    \centering
    \includegraphics[width=1.\linewidth]{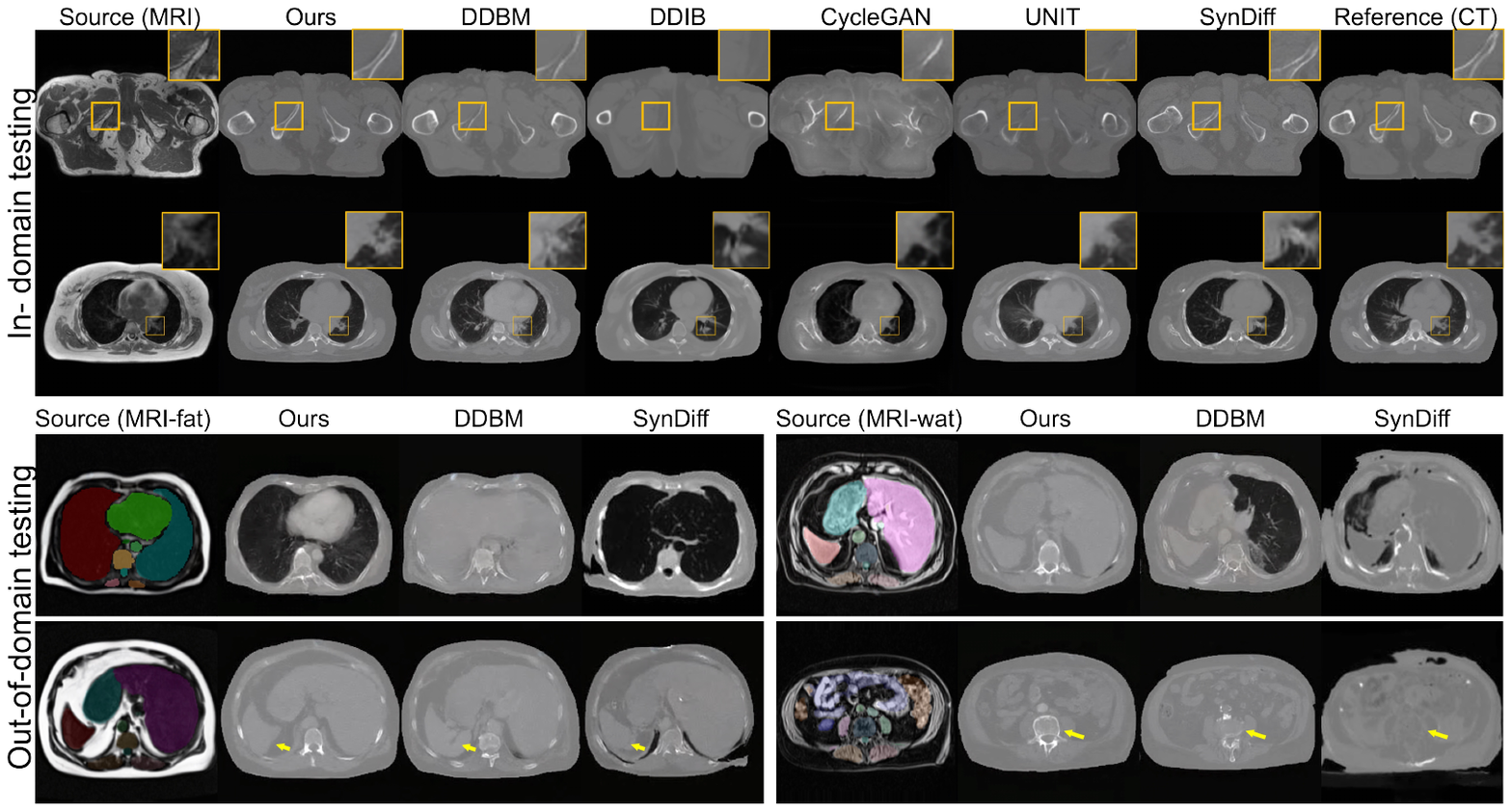}
    \vspace{-0.5em}
    \caption{Our SSB ensures anatomically consistent MRI$\rightarrow$CT translation across in-domain and out-of-domain scenarios. Segmentation masks are overlaid on MRI source images in OOD settings to provide structural reference, as paired CT ground truth is unavailable.
They are not used during training or inference and serve only to illustrate anatomical fidelity without segmentation supervision.}
\vspace{-0em}    
    \label{fig:main_results_medical_1}
\end{figure*}

\section{Experiments}
\label{Sec:experiments}
In this section, we present experimental comparisons of our method. For fairness, all baselines use their official implementations when publicly available, and detailed configurations are provided in the Appendix due to space limitations.
\subsection{Experimental Setup}
\label{SubSec:experimental_setup}
\noindent
\textbf{Medical MRI–CT Datasets.}
We conduct experiments on axial-view MRI$\rightarrow$CT translation with all images resized to $256\times256$ for efficiency.
For \textit{in-domain evaluation}, we use SynthRAD2023~\cite{thummerer2023synthrad2023} and SynthRAD2025~\cite{thummerer2025synthrad2025}, comprising $\sim950$ pre-registered MRI–CT volumetric pairs. Each data set is divided into training sets and test sets without subject overlap, yielding $6{,}596$ 2D test slices from $50$ subjects and the rest for training.
For \textit{training} our own shared-latent encoder and bridge models, we additionally aggregate unpaired MRI/CT data from AMOS2022~\cite{ji2022amos}, TotalSegmentator-MRI/CT~\cite{wasserthal2023totalsegmentator}, IXI~\citep{ixi_dataset}, pelvic MRI–CT~\cite{nyholm2018mr}, PSMA-FDG-PET-CT~\cite{gatidis2022whole}, and a subset of CT-RATE~\cite{hamamci2024developing}, augmented with sagittal and coronal views.
This yields about $1.05$M MRI and $2.2$M CT slices in total. For \textit{out-of-domain evaluation}, we sample $5{,}000$ fat- and water-contrast slices from $100$ UKBB-MRI~\cite{littlejohns2020uk} subjects, respectively, differing in contrast and resolution from the MRI training datasets.

\smallskip
\noindent
\textbf{General Domain Datasets.}
We evaluate our method on both class-label and text-guided I2I tasks. 
For class-label translation, we adopt Horse$\rightarrow$Zebra and Apple$\rightarrow$Orange~\cite{zhu2017unpaired}, using $30$ images per domain ($60$ total).  Each sample is paired with a GPT-5–generated target prompt to evaluate textual authenticity with respect to the target domain. For text-guided I2I editing, following~\cite{kulikov2024flowedit, tumanyan2022splicing}, we curate a diverse dataset for scene-style transfer (\emph{e.g.}, summer$\rightarrow$fall/winter) and object-level editing.
Images are sourced from Flickr~\citep{flickr}, DIV2K~\cite{agustsson2017ntire}, and Pexels~\citep{Pexels}, each paired with GPT-5–generated source captions and target prompts. In total, we collect $35$ natural scene images, synthesize $15$ additional scenes using FLUX-dev.~\cite{labs2025flux1kontextflowmatching}, and gather $60$ real object images, yielding over $100$ scene-style and $200$ object-editing text–image pairs.

\noindent
\textbf{Evaluation Metrics.}
For medical image translation (MRI$\rightarrow$CT), we evaluate image quality using normalized PSNR~\cite{gupta2018cnn}, multi-scale SSIM (MS-SSIM)~\cite{wang2003multiscale}, and FID~\cite{heusel2017gans}.
For natural I2I translation, which requires balancing semantic and structural consistency, we report CLIP-Text similarity (CLIP-T)~\cite{radford2021learning,rout2025semantic, kulikov2024flowedit, xu2023cyclenet}, DINO similarity~\cite{caron2021emerging,kulikov2024flowedit}, CLIP-Image similarity (CLIP-I), LPIPS~\cite{zhang2018perceptual}, MS-SSIM, and PSNR.
Similarly, for text-guided image editing, we evaluate semantic alignment with CLIP-T and structural fidelity with DINO, CLIP-I, LPIPS, SSIM~\cite{wang2004image}, and PSNR.

\begin{figure*}[t] 
    \centering
    \includegraphics[width=1.\linewidth]{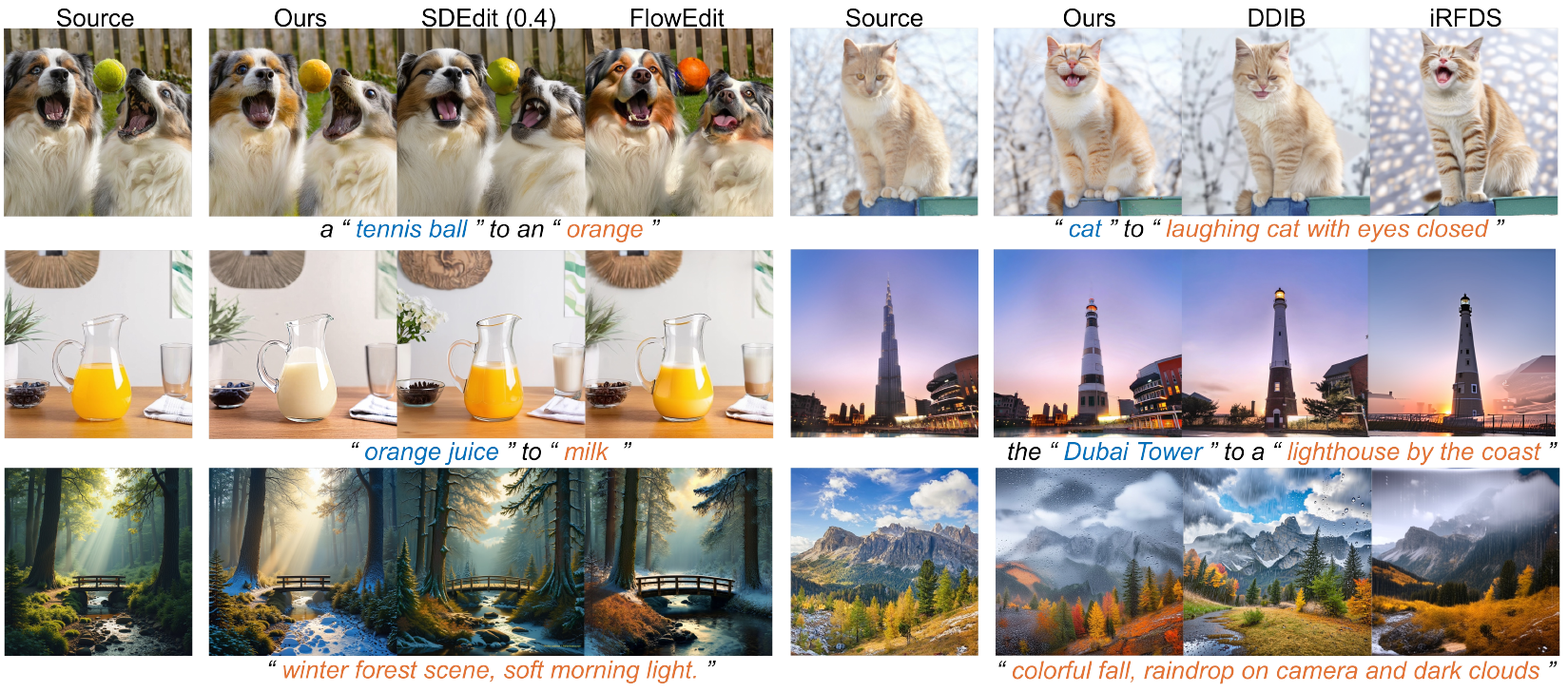}
    \vspace{-0.5em}
    \caption{Visual comparison with recent diffusion editing methods using SD3-M base model. Our approach generates visually consistent results with the source image, preserving structural integrity while convincingly applying appearance changes across diverse scenarios. }
\vspace{-1.em}    
    \label{fig:main_results_edit_1}
\end{figure*}
\begin{figure*}[t!] 
    \centering
    \includegraphics[width=0.9\linewidth]{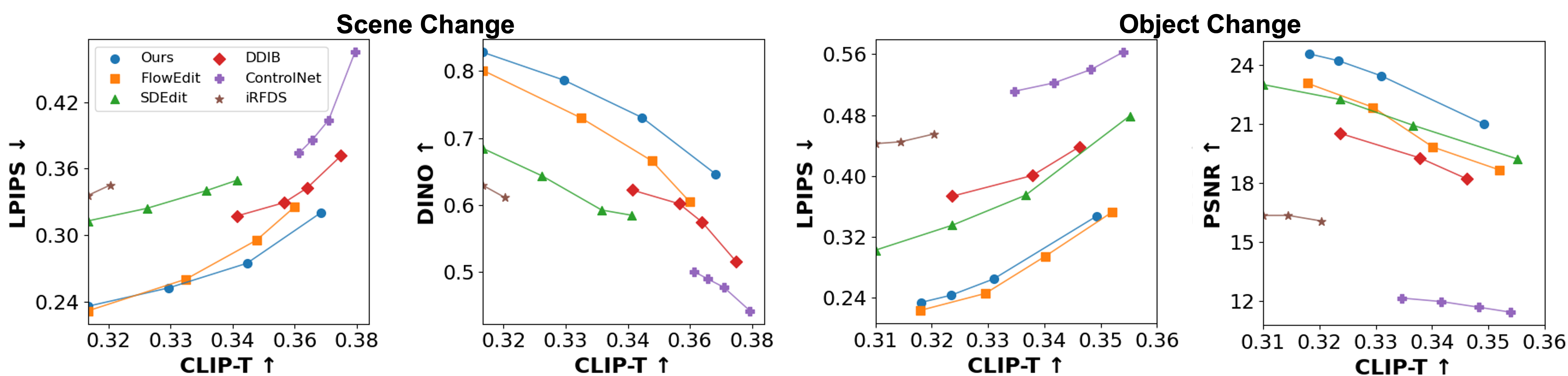}
    \vspace{-0.5em}
    \caption{Numerical results based on SD3-M model. Our method achieves a favorable balance between text adherence (CLIP-T) and structural preservation (DINO and PSNR) across both scene-editing and object-change tasks. Scene editing requires substantial global modifications, whereas object change focuses on localized adjustments. Connected markers represent different hyperparameter settings.}
    \vspace{-0em}
    \label{fig:table2}
\end{figure*}

\subsection{Appearance-Invariant Encoder Validation}
\label{Subsec:medcial_DINO}

We begin by validating the encoder alignment assumption from Theorem~\ref{prop:translation_error_main} via feature analysis on the MRI–CT dataset. Using CKNNA~\citep{huh2024position} for local structure and Cosine Similarity for global alignment, we find that standard DINOv2 exhibits a trade-off: acceptable alignment but high-frequency structural noise (see Fig.~\ref{fig:medical_encoders}), indicating a gap from the ideal geometry-preserving manifold.To minimize the encoder alignment error ($\|\Delta(T)\|_2$), we fine-tune a DINOv2-ViT-B/8 on the MRI–CT training set. Crucially, we integrate a retina-inspired filter~\cite{8306443} to suppress modality-specific appearance (e.g., contrast) and force the model to rely on structure representations. As shown in Fig.~\ref{fig:medical_encoders} (Left), our encoder achieves the optimal balance of metrics, significantly reducing the deviation from the ideal manifold compared to baselines like BiomedCLIP~\cite{zhang2023biomedclip}, MedSigCLIP~\cite{sellergren2025medgemma}, DINOv2-B~\citep{oquab2024dinov}, and MedVAE~\cite{varma2025medvae}. Qualitative visualizations (Right) confirm that our method eliminates patch-grid artifacts, yielding the smooth, anatomically consistent feature space required to satisfy our theoretical error bound. To the best of our knowledge, this is the first adaptation of DINOv2 specifically for MRI–CT image translation.

\subsection{Unpaired Image Translation}
\label{SubSec:unpaired_image_translation}
\noindent
\textbf{Medical I2I.} Given the pre-trained DINOv2 ViT-B/8 encoder described in Sec.~\ref{Subsec:medcial_DINO}, we construct the latent bridge using a U-Net backbone~\cite{dhariwal2021diffusion} between the DINOv2 encoder and a KL-regularized VAE~\cite{rombach2022high}, producing latent feature maps with a spatial resolution of $64\times64$ and $8$ channels. Since our focus is MRI$\rightarrow$CT synthesis, we train a single latent bridge model on the CT training datasets. Following~\citep{karras2022elucidating, zhou2024denoising, zhang2024exploring}, we parameterize the interpolant weights defined in Eq.~\ref{eq:latent-SI-forward-kernel} in terms of the signal-to-noise ratio (SNR), defined as $\mathrm{SNR}_t = a_t^2 / \sigma_t^2$ with $(a_t,\sigma_t)$ the standard DDPM~\citep{ho2020denoising} schedules. The weights are $\alpha_t = a_t\!\left(1-\frac{\mathrm{SNR}_T}{\mathrm{SNR}_t}\right)$, $\beta_t = \frac{a_t}{a_T}\,\frac{\mathrm{SNR}_T}{\mathrm{SNR}_t}$, $\gamma_t^{2} = \sigma_t^{2}\!\left(1-\frac{\mathrm{SNR}_T}{\mathrm{SNR}_t}\right)$. At inference, we adapt the DDBM~\cite{zhou2024denoising} hybrid sampler which is based on the predictor-corrector sampler introduced in~\citep{song2020score}.

We compare our method against representative baselines widely used in medical I2I translation, including GAN-based models CycleGAN~\cite{zhu2017unpaired} and UNIT~\cite{liu2017unsupervised}, zero-shot diffusion approaches SDEdit~\cite{meng2022sdedit} and DDIB~\cite{su2022dual}, and the hybrid CycleGAN–diffusion framework SynDiff~\cite{Muzaffer2023tmi}.
Since edge-based representations, such as Canny edges, are commonly used in MRI–CT translation—where diffusion models learn to reconstruct images from edge maps in a self-supervised manner~\cite{chen2024contourdiff}—we additionally implement a DDBM variant using Canny edges. This comparison demonstrates that DINO-based embeddings capture richer geometric and semantic information than handcrafted edge filters. Fig.~\ref{fig:main_results_medical_1} and Table~\ref{tab:mrict_translation} present qualitative and quantitative results on in-domain and OOD MRI$\rightarrow$CT translation.  Since no paired CT is available, we compute FID against CT scans from the training set and MS-SSIM between the input MRI and the synthesized CT as a proxy for structural similarity. We also provide qualitative results and segmentation overlays, with additional visuals in the Appendix. Our method shows stronger robustness to new MRI contrasts and achieves more accurate translations than SynDiff and DDBM, preserving geometry and modality realism. 

\noindent
\textbf{General Domain I2I Translation.} We compare our method against established baselines on the Horse $\rightarrow$ Zebra and Apple $\rightarrow$ Orange benchmarks. For this task, we fine-tune the flow-based SiT transformer~\cite{ma2024sit,yu2024representation}, pre-trained on ImageNet1K, at 256×256 image resolution. Since recently released DINOv3~\citep{simeoni2025dinov3} provides higher-quality dense features than official DINOv2, we adopt the pretrained DINOv3 ViT-L/16 as our backbone for general-domain tasks. As discussed in Sec.~\ref{Subsec:practial_design_choice}, we inject PCA-compressed DINOv3 features (first-$B{=}16$) into the SiT backbone via a zero-initialized linear projection layer with positional embeddings. The projected features are added in parallel to the SiT input layer for feature fusion, enabling smooth initialization when introducing new conditioning inputs, similar to the zero-start strategy in~\cite{zhang2023adding}. For endpoint $\vz_T$, the projected features are channel-wise averaged to match the latent dimensionality of the KL-VAE endpoint $\vz_0$. We adopt the linear weights for the endpoints and a bridge variance schedule $\alpha_t = 1-t$, $\beta_t = t$, $\gamma_t^2 = \gamma_{\max}^2\,t(1-t)$, with $\gamma_{\text{max}}=0.1$. We set $b=1$, resulting $\vz_T \sim\gN\big(E_{\phi}(\vx_0), \mathbf{I})$. The entire SiT is then finetuned on the same ImageNet1k-256 dataset used for pretraining.
Table~\ref{tab:natural_translation} presents the average numerical comparisons with baselines including CycleGAN~\cite{zhu2017unpaired}, CUT~\cite{park2020contrastive}, SDEdit~\cite{meng2022sdedit}, DDIB~\cite{su2022dual}, CycleNet~\cite{xu2023cyclenet}, and ControlNet~\cite{zhang2023adding}. Overall, our method achieves the best combination of authenticity to the target text (CLIP-T) and structural consistency with the source.
\subsection{Text-to-Image Editing}
In this section, we present our experimental results on text-guided I2I editing. We finetune the pre-trained text-image SD3~\cite{esser2024scaling} medium (SD3-M) model. Similar to SiT fine-tuning, we inject first-$B{=}32$ PCA-compressed DINOv3 features into SD3’s hidden layers via zero-initialized projections, and construct $\vz_T$ by channel-wise averaging across the PCA components. We use the same endpoint weighting and bridge variance schedules for the unpaired I2I translation present before. We collect a small yet high-resolution subset of the LAION-5B~\cite{schuhmann2022laion} dataset to fine-tune our SD3-M model. This subset contains approximately 1.2 M text–image pairs, with an average resolution of about 1200 × 1400 pixels. For computational efficiency, we update only the attention layers of the model while keeping all original linear layers frozen (see additional implementation details in the Appendix). We compare our method with recent baselines officially implemented on SD3-M, including SDEdit~\cite{meng2022sdedit}, DDIB~\cite{su2022dual}, iRFDS~\cite{yang2025texttoimage}, FlowEdit~\cite{kulikov2024flowedit}, and ControlNet~\cite{zhang2023adding}. Because semantic alignment (CLIP-T) and structural fidelity often trade off~\citep{kulikov2024flowedit}, we evaluate each method across a range of hyperparameters to reveal their operating tradeoff curves in Fig.~\ref{fig:table2}. The left panel shows scene-style editing, which involves substantial global appearance changes. Accordingly, structural fidelity for scene-style editing metrics (SSIM, PSNR, LPIPS) are computed on the luminance (YCbCr) channel to better reflect perceptual structure consistency. Fig.~\ref{fig:table2} (\emph{Right}) shows object-level editing with localized modifications. Our method achieves comparable or superior performance to state-of-the-art approaches such as FlowEdit across both semantic and structural metrics, particularly in complex scene edits that demand significant style shifts. Fig.~\ref{fig:my_picture1} presents visual comparisons with ControlNet.
\begin{figure}[t!] 
    \centering
    \includegraphics[width=1.\linewidth]{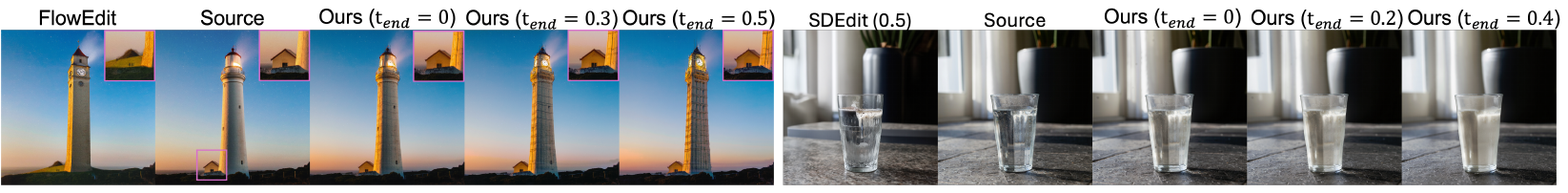}
    \vspace{-0.0em}
    \caption{Balancing structure and appearance via vector-field control. Interpolating between domain flows (~\emph{e.g., Left: “lighthouse” → “clock tower”} and \emph{Right: “water” → “milk”}) enables controllable trade-offs between structural consistency and appearance fidelity.}
    \vspace{-1em}
    \label{fig:my_ablation2}
\end{figure}

\begin{figure}[t] 
    \centering
    \includegraphics[width=0.5\linewidth]{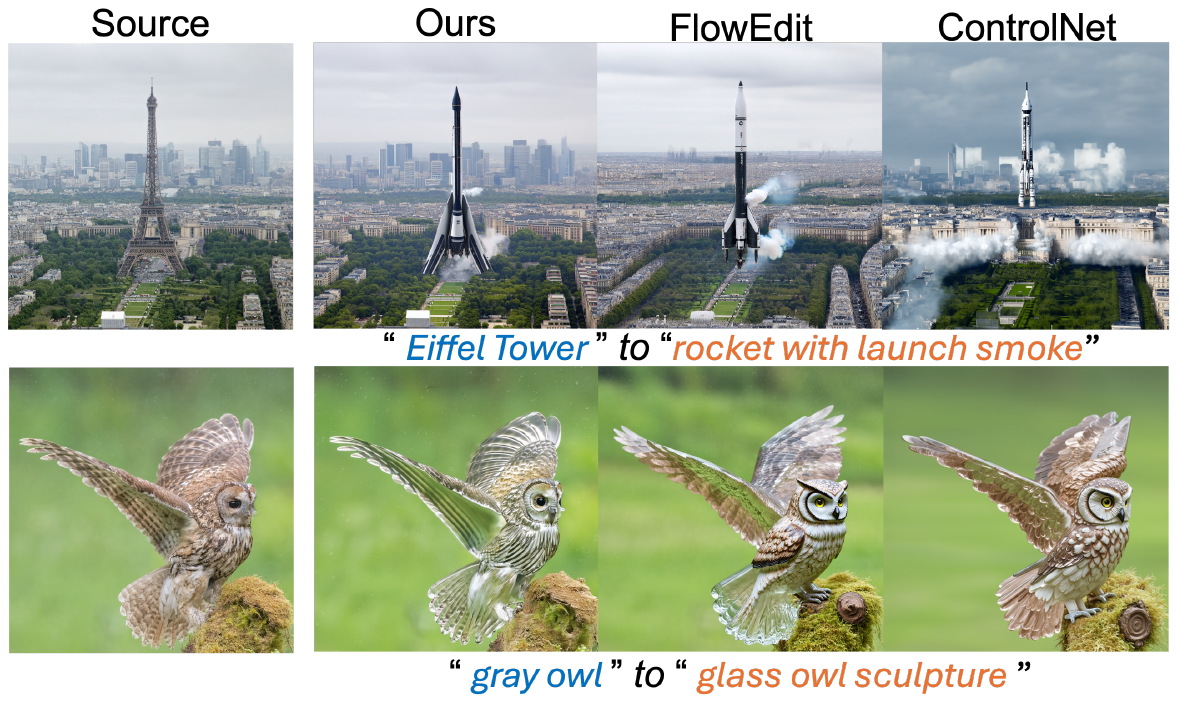}
    \vspace{-0em}
    \caption{More results comparing with FlowEdit and ControlNet.}
    \vspace{-1em}
    \label{fig:my_picture1}
\end{figure}

\section{Conclusion and Limitations}
We introduce the SSB, a diffusion-based framework for unpaired image translation and editing built upon a shared latent-space formulation.
By constructing a unified latent space across domains through self-supervised, appearance-invariant encoders, SSB enables structure-preserving and semantically aligned translation without paired supervision or adversarial training.
Experiments across both medical and natural domains demonstrate that this unified latent representation facilitates robust cross-domain generalization, yielding anatomically consistent MRI→CT synthesis and controllable appearance manipulation in natural images. While SSB’s strong structure preservation enables high-fidelity appearance changes, it becomes less effective when transformations require fundamentally altering object geometry. Edits that imply a shift in object category violate the geometry-aware prior underlying our semantic manifold. We discuss and illustrate these failure cases in the Appendix.

\section*{Acknowledgments}
This work was supported by the Stanford Center for AI in Medicine and Imaging (AIMI) and the Stanford Institute for Human Centered AI (HAI).

\bibliographystyle{unsrt}  
\bibliography{references}  
\newpage
\appendix

\begin{center}
    {\huge \bf Supplementary Material for \\ Self-Supervised Semantic Bridge \par}
    \vspace{1em}
\end{center}

\section{Theoretical Details}
Our theoretical analysis aims to quantify the difference between the ground-truth target $\vx_0$ and the predicted translation $\vxbar_0$. We consider the ODE formulations of diffusion models. The following transparent assumptions are made throughout our analysis.

\begin{assumption}
\label{As:Encoder}
Given a pair of source and target images $(\vx^{(j)}, \vx^{(i)})$ that share the same latent code $\vz$, the learned encoder $E_\phi : \R^n \rightarrow \R^m$ may not perfectly map both images to the identical latent code; that is, $E_\phi(\vx^{(j)}) \neq E_\phi(\vx^{(i)})$.
\end{assumption}

\noindent
This assumption accounts for the imperfection of the learned encoder. In our analysis, we address this by introducing an error term associated with different initializations.

\begin{assumption}
\label{As:Decoder}
We assume the learned KL-VAE decoder $D_\varphi:\R^m\rightarrow\R^n$ satisfies the following conditions:
\begin{enumerate}[ref=\theassumption.(\alph*), label=(\alph*)]
\item\label{As:Decoder.a} $D_\varphi$ has a bounded reconstruction error; that is, $\|D_\varphi(\vz) - \vx\|_2\leq\varepsilon_{D_\varphi}$, where $\vz$ denotes the latent code of $\vx$.
\item\label{As:Decoder.b} $D_\varphi$ is Lipschitz continuous with $L_{D_\varphi}>0$ for any $\vz_1, \vz_2\in\R^m$
\begin{equation*}
\|D_\varphi(\vz_1) - D_\varphi(\vz_2)\|_2 \leq L_{D_\varphi} \|\vz_1 - \vz_2\|_2.
\end{equation*}
\end{enumerate}
\end{assumption}
\noindent
Assumption~\ref{As:Decoder.a} accounts for the imperfection of the learned decoder by assuming a bounded reconstruction error.
Assumption~\ref{As:Decoder.b} imposes a standard regularity condition to ensure that the output of decoder does not change drastically.

\begin{assumption}
\label{As:Vector}
We assume the learned latent vector field $v_\theta(t, \vz, \vz_T)$ satisfies the following conditions:
\begin{enumerate}[ref=\theassumption(\alph*), label=(\alph*)]
\item \label{As:Vector.a} For any $t>0$, $v_\theta(t, \vz, \vz_T)$ is Lipschitz continuous with $L_v(t)>0$ for any $\vz_1, \vz_2\in\R^n$
\begin{equation*}
\|v_\theta(t, \vz_1, \vz_T) - v_\theta(t, \vz_2, \vz_T)\|_2 \leq L_v(t) \|\vz_1 - \vz_2\|_2.
\end{equation*}
\item \label{As:Vector.b} For any $t>0$, $v_\theta(t, \vz, \vz_T)$ has a $L^2$-accurate error
\begin{equation*}
\E\left[ \|v_\theta(t, \vz, \vz_T) - v(t, \vz, \vz_T)\|_2^2 \right] \leq \varepsilon_v(t),
\end{equation*}
where $\varepsilon_v(t)<+\infty$.
\end{enumerate}
\end{assumption}

\noindent
Note that our analysis only requires $L^2$-accurate vector field estimate under the measure $p_t$, which is a relatively weaker condition than the $L^\infty$-accurate estimate that assumes the error between the learned and ground-truth score is uniformly bounded.

\subsection{Derivation for Theorem 4.1}
\label{Sec:DerivationODE}

In this section, we present the detailed derivation for Theorem 4.1. 
The derivation is organized into four steps. The first three steps progressively quantify the errors introduced by the encoder, the learned vector field, the Euler discretization, and the decoder, while the final step derives the high-probability upper bound.

\smallskip\noindent
\textbf{Step 1: Encoder Error \& Continuous-Time Error Propagation.} 
Consider the following ODEs characterized by ideal and learned vector fields, respectively
\begin{align}
\label{Eq:ODEs}
&d\vz_t = v(t, \vz_t, \vz_T) dt,\quad \text{with} \quad \vz_T = E_\phi(\vx^{(i)})\\ 
&d\vzhat_t = v_\theta(t, \vzhat_t, \vzhat_T) dt,\quad \text{with} \quad \vzhat_T = E_\phi(\vx^{(j)})
\end{align}
where both ODEs evolve from $T$ to $0$.
Define the time-varying difference as $\Delta(t) = \vz_t - \vzhat_t$. We can write
\begin{align}
\label{Eq:DifferenceODE}
\frac{d}{dt}\Delta(t) &= v(t, \vz_t, \vz_T)- v_\theta(t, \vzhat_t, \vzhat_T) \nonumber\\
&= \Big(v(t, \vz_t, \vz_T)- v_\theta(t, \vz_t, \vz_T)\Big) + \Big(v_\theta(t, \vz_t, \vz_T)- v_\theta(t, \vzhat_t, \vzhat_T)\Big)
\end{align}
Taking the norm yields
\begin{align*}
\|\frac{d}{dt}\Delta(t)\|_2 
&\leq \|v_\theta(t, \vz_t, \vz_T)- v_\theta(t, \vzhat_t, \vzhat_T)\|_2 + \|v(t, \vz_t, \vz_T)- v_\theta(t, \vz_t, \vz_T)\|_2 \\
&= L_v(t)\|\Delta(t)\|_2 + \|\ve(t)\|_2.
\end{align*}
Here, we define $\ve(t)=v(t, \vz_t, \vz_T)- v_\theta(t, \vz_t, \vz_T)$.
We can bound the growth of $\|\Delta(t)\|_2$ by using the Cauchy-Schwartz inequality
\begin{align*}
\frac{d}{dt}\|\Delta(t)\|_2 
&= \frac{\Delta(t)^\Tsf \frac{d}{dt}\Delta(t)}{\|\Delta(t)\|_2} 
\leq \frac{\|\Delta(t)\|_2\cdot \|\frac{d}{dt}\Delta(t)\|_2}{\|\Delta(t)\|_2} \leq L_v(t)\|\Delta(t)\|_2 + \|\ve(t)\|_2.
\end{align*}
Due to the imperfection of the learned encoder, the two ODEs are run with different initialization; that is, $\|\Delta(T)\|_2 = \| E_\phi(\vx^{(j)}) - E_\phi(\vx^{(i)}) \|_2 \neq 0$.
To quantify the difference at end point $t=0$, we use the \textit{Grönwall Theorem} (see Theorem~\ref{STh:Gronwall}) with time-varying coefficients
\begin{align}
\label{Eq:ContinuousODE}
\|\Delta(0)\|_2 
&\leq \exp\bigg(\int_0^T L_v(s) ds\bigg) \|\Delta(T)\|_2 + \int_0^T \exp\bigg(\int_0^t L_v(s) ds\bigg) \|\ve(t)\|_2 dt
\end{align}
Equation \eqref{Eq:ContinuousODE} relates $\|\Delta(0)\|_2$ to the vector field approximation error in the continuous-time domain.

\smallskip\noindent
\textbf{Step 2: Euler Discretization Error.} 
We next incorporate the error caused by discretizing the ODE. 
Here, we consider the \textit{Euler method}, as it is the foundation of other popular first-order solvers. 
Our analysis builds on classical analysis on the error of ODE discretization~\citep{butcher2016numerical}, and below we provide a brief derivation for the reader's convenience. To begin with, we introduce a standard assumption on the second-order time derivative of the solution path $\vzhat_t$~\citep{butcher2016numerical}.

\begin{assumption}
\label{As:TimeDerivative}
The second-order time derivative of the solution $\vzhat_t$ is bounded; that is, $\sup_{t\in[0,T]}\|\frac{d^2}{dt^2}\vzhat_t\|_2 \leq B$, where $B>0$ is a finite constant.
\end{assumption}

Consider the Euler method for solving the ODE in \Eqref{Eq:ODEs} for $t\in[0,T]$.
\begin{equation*}
\vzbar_{i+1} = \vzbar_{i} + \tau v_\theta(t_i, \vzbar_{t_i}, \vzhat_T),
\end{equation*}
where we use $\vzbar_i$ to denote the discrete solution, $\tau=T/N$ the stepsize, $N>0$ the number of steps, and $t_i$ the time with $t_i = T-i\tau$.
Applying the first-order Taylor expansion, we define the local truncation error for $[t_i, t_{i+1}]$
\begin{equation*}
\veta_i = \vzhat_{t_{i+1}} - \Big(\vzhat_{t_i} + \tau v_\theta(t_i, \vzhat_{t_i}, \vzhat_T)\Big).
\end{equation*}
Define the discretization error at step $i$ as $\varepsilon_i^\text{disc} = \|\vzhat_{t_i} - \vzbar_i\|_2$. We can write
\begin{align}
\label{Eq:Recursion}
&\varepsilon_{i+1}^\text{disc} = \|\vzhat_{t_{i+1}} - \vzbar_{i+1}\|_2 \nonumber\\
&= \left\|\Big( \vzhat_{t_i} + \tau v_\theta(t_i, \vzhat_{t_i}, \vzhat_T) + \veta_i \Big) - \Big( \vzbar_{i} + \tau v_\theta(t_i, \vzbar_i, \vzhat_T) \Big) \right\|_2 \nonumber\\
&= \left\|\vzhat_{t_i} - \vzbar_i + \tau \Big( v_\theta(t_i, \vzhat_{t_i}, \vzhat_{T}) - v_\theta(t_i, \vzbar_i, \vzhat_T) \Big) + \veta_i \right\|_2 \nonumber\\
&\leq (1+\tau L_v(t))\varepsilon_i^\text{disc} + \|\veta_i\|_2
\end{align}
The above inequality establishes the multiplicative recursion for $\varepsilon_i$ with the truncation error $\|\veta_i\|_2$.
From Assumption~\ref{As:TimeDerivative}, we know that the truncation error is bounded 
\begin{equation*}
\|\veta_i\|_2 \leq \frac{\tau^2}{2} \|\frac{d^2}{dt^2}\vz_{\xi_i} \|_2 \leq \frac{\tau^2}{2}B,
\end{equation*}
where $\xi_i\in(t_i, t_{i+1})$. As we run the continuous-time and discretized ODEs with the same initialization ($\vzbar_0 = \vzhat_{t_0} = \vzhat_T$), the initial discretization error is zero ($\varepsilon_0^\text{disc}=0$).
By unrolling~\eqref{Eq:Recursion}, we obtain
\begin{align*}
\varepsilon_N^\text{disc} &= \|\vzhat_0 - \vzbar_N \|_2 \\
&\leq \underbrace{\left( \prod_{j=0}^{N-1} (1+\tau L_v(t_j)) \right) \varepsilon_0^\text{disc}}_{=0} + \left( \frac{B}{2}\sum_{i=0}^{N-1} \prod_{j=i+1}^{N-1} (1+\tau L_v(t_j)) \right) \tau^2,
\end{align*}
We can simplify the product in the nonzero term by using facts $1+u\leq e^u$
\begin{equation*}
\prod_{j=i+1}^{N-1} (1+\tau L_v(t_j)) \leq \exp\left(\sum_{j=i+1}^{N-1}\tau L_v(t_j)\right) 
\end{equation*}
Expressing the summation as an integral, we obtain
\begin{align}
\label{Eq:DistError}
\varepsilon_N^{\text{disc}}
&\leq 
\left(\frac{B}{2} 
\sum_{i=0}^{N-1}
\prod_{j=i+1}^{N-1} 
(1+\tau L_v(t_j))
\right)\tau^2 
\leq 
\left(\frac{B}{2}
\sum_{i=0}^{N-1}
\exp\!\left(
\sum_{j=i+1}^{N-1}\tau L_v(t_j)
\right)
\right)\tau^2.
\end{align}

\smallskip\noindent
\textbf{Step 3: Decoder Error \& Deterministic Upperbound.} Now we derive the final bound by accounting the imperfection of the learned decoder. According to Assumption~\ref{As:Decoder}, we can write

\begin{equation*}
\begin{aligned}
\|\vx - \bar{\vx}\|_2
&= \big\|
D_\varphi(\vz_0) - D_\varphi(\bar{\vz}_N)
+ \vx - D_\varphi(\vz_0)
\big\|_2 \\
&\leq L_{D_\varphi} \, \|\vz_0 - \bar{\vz}_N\|_2 + \varepsilon_{D_\varphi}.
\end{aligned}
\end{equation*}
Applying triangle inequality yields to
\begin{align}
\label{Eq:DiffBound}
\|\vz_0 - \vzbar_N\|_2 
&\leq \| \vz_0 - \vzhat_0 \|_2 + \| \vzhat_0 - \vzbar_N \|_2 \nonumber\\
&= \|\Delta(0)\|_2 + \varepsilon_N.
\end{align}
Combining equations \eqref{Eq:ContinuousODE} and \eqref{Eq:DistError}, we can obtain the final bound
\begin{align}
\label{Eq:Bound}
\|\vx - \bar{\vx}\|_2 
&\leq 
\underbrace{\vphantom{\int_0^T} W(T)\|\Delta(T)\|_2}_{\text{Encoder Err.}}
+ \underbrace{\int_0^T W(t)\|\ve(t)\|_2\,dt}_{\text{Field Approx. Err.}} 
+ \underbrace{\vphantom{\int_0^T}\etC\tau^2}_{\text{Discret. Err.}}
+ \underbrace{\vphantom{\int_0^T}\varepsilon_{D_\varphi}}_{\text{Decoder Err.}}.
\end{align}

where $W(t)$ and $\etC$ are given by
\begin{align}
\label{Eq:WandC}
W(t) &= L_{D_\varphi}\exp\!\left(\int_0^t L_v(s)\,ds\right), \nonumber\\
\etC &= \frac{B}{2}\sum_{i=0}^{N-1}\exp\!\left(\sum_{j=i+1}^{N-1}\tau L_v(t_j)\right).
\end{align}
\noindent

\smallskip\noindent
\textbf{Step 4: Probabilistic Upperbound.} 
In~\eqref{Eq:Bound}, randomness arises from the vector field approximation error according to Assumption~\ref{As:Vector.b}.
For notational convenience, we define the following quantities
\begin{equation*}
\begin{aligned}
\etE &\triangleq \int_0^T \varepsilon_v(t)\,dt, \\
\etW &\triangleq \int_0^T W(t)^2\,dt, \\
\etQ &\triangleq \int_0^T \|\ve(t)\|_2^2\,dt.
\end{aligned}
\end{equation*}
According to the Cauchy-Schwartz inequality, we can obtain
\begin{align}
\label{Eq:CauchySchwartzWQ}
\int_0^T W(t)\,\|\ve(t)\|_2\,dt 
&\leq \sqrt{\int_0^T W(t)^2\,dt}\,
       \sqrt{\int_0^T \|\ve(t)\|_2^2\,dt} 
       \leq \sqrt{\etW\,\etQ}.
\end{align}
Applying the \textit{Markov inequality} (see Theorem~\ref{STh:Markov}) to $\etQ$ yields
\begin{equation*}
\Pr\left(\etQ\leq\frac{\etE}{\delta}\right) 
= 1-\Pr\left(\etQ\geq\frac{\etE}{\delta}\right) 
\geq 1-\frac{\E[\etQ]}{\etE/\delta}.
\end{equation*}
As $\|\ve(t)\|_2^2$ is nonnegative, we can swap the integral in $\E[\etQ]$ (Fubini-Tonelli Theorem) and write
\begin{align*}
\E[\etQ] 
&= \E\left[ \int_0^T\|v(t, \vz_t, \vz_T)- v_\theta(t, \vz_t, \vz_T)\|_2^2dt \right] \\
&= \int_0^T\E\left[ \|v(t, \vz_t, \vz_T)- v_\theta(t, \vz_t, \vz_T)\|_2^2 \right]dt \\
&\leq \int_0^T \varepsilon_v (t) dt = \etE,
\end{align*}
where the last equality uses Assumption~\ref{As:Vector.b}. Overall, we have 
\begin{equation}
\label{Eq:QProb}
\Pr\left(\etQ\leq\frac{\etE}{\delta}\right) 
\geq 1-\frac{\E[\etQ]}{\etE/\delta}
\geq 1-\frac{\etE}{\etE/\delta}
\geq 1-\delta.
\end{equation}
Note that, by applying~\eqref{Eq:CauchySchwartzWQ}, we can express~\eqref{Eq:Bound} as
\begin{equation*}
\|\vx-\vxbar\|_2 \leq W(T) \|\Delta(T)\|_2 + \sqrt{\etW\etQ} + C \tau^2 + \varepsilon_{D_\varphi}.
\end{equation*}
Applying the result in \eqref{Eq:QProb}, we derive the result in Theorem 4.1, that is, with probability at least $1-\delta$,
\begin{equation*}
\|\vx-\vxbar\|_2 \leq W(T) \|\Delta(T)\|_2 + \sqrt{\frac{\etW\etE}{\delta}} + \etC \tau^2 + \varepsilon_{D_\varphi}.
\end{equation*}

\subsection{Analysis of the DINO Encoders Training}

\begin{remark}[Appearance invariance]
\label{lem:appearance-invariance}
Let augmentations factor as $A = G \circ C$, 
with $C \in \mathcal C$ (appearance transforms,~e.g., color, contrast, and style) 
and $G \in \mathcal G$ (geometric transforms,~e.g., shape scale, and layout). 
At stationary ($\phi \approx \phi'$), any minimizer $\phi^\star$ of $\mathcal L{(\phi;\phi')}$ satisfies, for almost every $\vx$,
\[
p_{\phi^\star}(G \circ C(\vx)) \;\approx\; p_{\phi^\star}(G(\vx)), 
\quad \forall\, C \in \mathcal C,\; G \in \mathcal G.
\]
\end{remark}
\begin{proof}
At stationarity ($\phi=\phi'$), the training loss reduces to
\begin{equation}
\mathcal L_{\phi} 
= \mathbb E_{A_1,A_2,\vx}\!\left[
-\,p_{\phi}(A_2(\vx))^\top \log p_\phi(A_1(\vx))
\right].
\end{equation}
For fixed $(\vx,A_1,A_2)$ this is the cross-entropy 
$\mathrm{CE}(q,p)$ between $q=p_\phi(A_2(\vx))$ and $p=p_\phi(A_1(\vx))$. 
Since $\mathrm{CE}(q,p)\geq H(q)$ with equality iff $p=q$, the loss is minimized precisely when
\begin{equation}
p_\phi(A_1(\vx)) = p_\phi(A_2(\vx)), 
\quad \forall\, A_1,A_2.
\label{eq:CE-min}
\end{equation}
Writing each augmentation as $A=G\circ C$ with $C\in\mathcal C$ (appearance) and 
$G\in\mathcal G$ (geometry), condition~\eqref{eq:CE-min} implies
\begin{equation}
p_\phi(G\circ C(\vx)) = p_\phi(G(\vx)), 
\quad \forall\, C\in\mathcal C,\; G\in\mathcal G,
\end{equation}
which proves the remark.
\end{proof}

\subsection{Supporting Theorems}
We here summarize all the theorems used in our proofs.

\begin{theorem}[Grönwall Theorem]
\label{STh:Gronwall}
Let $f(t)$ be a nonnegative, absolutely continuous function on $[0,T]$, which satisfies for almost everywhere $t$ the differential inequality
$$\frac{df(t)}{dt}\leq c(t) f(t) + h(t),$$
where $f(t)$ and $h(t)$ are nonnegative, summable functions on [0,T]. Then
$$f(t)\leq \exp\left(\int_0^t c(s) ds\right)\left[f(0) + \int_0^t h(s)ds\right].$$
\end{theorem}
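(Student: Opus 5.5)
The plan is the standard integrating-factor argument. I read the hypothesis ``$f(t)$ and $h(t)$ are nonnegative, summable'' as intending that the coefficient $c$ is also nonnegative and summable on $[0,T]$, as well as $h$. Nonnegativity of $c$ is exactly what lets the final bound take the stated form with $\exp(\int_0^t c(s)\,ds)$ factored out in front of $\int_0^t h(s)\,ds$; I make this assumption explicit at the start. It holds in the paper's application, where $c = L_v \ge 0$ and the backward-in-time ODE of Step~1 is handled by the substitution $s \mapsto T-s$.

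\textbf{Step 1: rescale $f$.} Set $\Lambda(t) = \int_0^t c(s)\,ds$. Since $c$ is summable, $\Lambda$ is absolutely continuous on $[0,T]$ and $\Lambda' = c$ almost everywhere. Then $e^{-\Lambda}$ is absolutely continuous, because it is a Lipschitz function composed with an absolutely continuous function taking values in a bounded set. Define $g(t) = e^{-\Lambda(t)} f(t)$. As a product of two absolutely continuous functions on a compact interval, $g$ is absolutely continuous. Almost everywhere it satisfies
\begin{equation*}
g'(t) = e^{-\Lambda(t)}\bigl(f'(t) - c(t) f(t)\bigr) \le e^{-\Lambda(t)} h(t),
\end{equation*}
by the assumed differential inequality and positivity of $e^{-\Lambda}$.

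\textbf{Step 2: integrate.} Because $g$ is absolutely continuous, the fundamental theorem of calculus for Lebesgue integrals applies. Integrating from $0$ to $t$ gives
\begin{equation*}
e^{-\Lambda(t)} f(t) \le f(0) + \int_0^t e^{-\Lambda(s)} h(s)\,ds .
\end{equation*}
Multiplying by $e^{\Lambda(t)}$ yields the sharp form
\begin{equation*}
f(t) \le e^{\Lambda(t)} f(0) + \int_0^t e^{\Lambda(t)-\Lambda(s)} h(s)\,ds .
\end{equation*}

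\textbf{Step 3: reach the stated bound.} Since $c \ge 0$, $\Lambda$ is nondecreasing with $\Lambda(s) \ge 0$, so $e^{\Lambda(t)-\Lambda(s)} \le e^{\Lambda(t)}$ for $0 \le s \le t$. Together with $h \ge 0$, this gives $\int_0^t e^{\Lambda(t)-\Lambda(s)} h(s)\,ds \le e^{\Lambda(t)} \int_0^t h(s)\,ds$. Substituting into the sharp form produces exactly
\begin{equation*}
f(t) \le \exp\Bigl(\int_0^t c(s)\,ds\Bigr)\Bigl[f(0) + \int_0^t h(s)\,ds\Bigr].
\end{equation*}
The intermediate sharp form is also the one actually used in Step~1 of the derivation of Theorem~4.1, where the weight $\exp(\int_0^t L_v)$ appears inside the integral.

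\textbf{Main obstacle.} The calculus is routine, so the care lies in the regularity and sign bookkeeping. One must check that the product and chain rules hold almost everywhere for absolutely continuous functions, which is what justifies integrating $g'$. One must also make the sign assumption $c \ge 0$ explicit, since without it the last comparison can fail. For example, with $f(0)=0$ and $c<0$ one can have $e^{\Lambda(t)-\Lambda(s)} > e^{\Lambda(t)}$, so the stated bound need not hold.
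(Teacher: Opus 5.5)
Your proof is correct and follows essentially the same route as the paper, which simply defers to the integrating-factor proof in Appendix B of \cite{evans2022partial}: multiply by $e^{-\int_0^t c}$, integrate the resulting a.e.\ inequality using absolute continuity, then drop the factor $e^{-\int_0^s c}\le 1$ using $c\ge 0$. You were right to read the hypothesis as ``$c$ and $h$ nonnegative and summable'' (this is Evans's statement), and your example with $c<0$ correctly shows that this sign condition cannot be dropped.
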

\begin{proof}
See Appendix B in~\cite{evans2022partial}.
\end{proof}

\begin{theorem}[Markov Inequality]
\label{STh:Markov}
Let $X$ be a nonnegative random variable and let $a>0$. Then, we have
$$\Pr(X \geq a) \leq \frac{\E[X]}{a}.$$
\end{theorem}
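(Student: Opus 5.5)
The statement to prove is the last one displayed, the Markov inequality (Theorem~\ref{STh:Markov}): for a nonnegative random variable $X$ and $a>0$, $\Pr(X\ge a)\le \E[X]/a$. The plan is the standard indicator-function argument, built on a pointwise inequality between random variables and then taking expectations.

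First, I will establish the pointwise bound
\begin{equation*}
a\,\mathds{1}\{X\ge a\}\;\le\; X
\end{equation*}
everywhere on the sample space, splitting into two cases according to the event $\{X\ge a\}$.

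\begin{itemize}
\item On the event $\{X\ge a\}$, the left-hand side equals $a$, which is at most $X$ by the defining condition of the event.
\item On the complement, the left-hand side is $0$, which is at most $X$ because $X\ge 0$.
\end{itemize}

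This case split is the only place where nonnegativity of $X$ is used.

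Second, I will take expectations of both sides. By monotonicity and linearity of the expectation (the Lebesgue integral with respect to $\Pr$), this gives $a\,\Pr(X\ge a)\le \E[X]$. This step requires no integrability assumption. If $\E[X]=+\infty$ the inequality holds trivially, and otherwise both sides are finite.

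Third, I will divide by $a>0$ to obtain $\Pr(X\ge a)\le \E[X]/a$, which is the claim.

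There is no genuine obstacle here. The only point to be careful about is measurability of the event $\{X\ge a\}$, which follows because $X$ is a random variable.

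For context, in Step~4 of the derivation of Theorem~4.1 the paper applies this result with $X=\etQ$ and $a=\etE/\delta$. It uses the bound $\E[\etQ]\le\etE$, which comes from Fubini--Tonelli and Assumption~\ref{As:Vector.b}, to conclude that $\Pr(\etQ\le \etE/\delta)\ge 1-\delta$.
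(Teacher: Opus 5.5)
Your indicator-function argument is correct and complete; the paper does not prove this result itself but cites Ash (p.~276), where the same standard argument appears, namely integrating $X$ over the event $\{X\ge a\}$ to obtain $\E[X]\ge a\Pr(X\ge a)$. You are also right that no integrability assumption on $X$ is needed, since monotonicity of the integral holds for nonnegative measurable functions taking values in the extended reals.
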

\begin{proof}
See pp 276 in~\cite{ash2000probability}.
\end{proof}

\section{Datasets Usage}
\label{Sec:Dataset}
This section provides comprehensive information about the datasets used in our experiments, including data characteristics, annotation details, and their roles in our experimental setup. We organize datasets into two categories: Medical images and Natural images.

\noindent
\subsection{Medical Images}
When constructing the CT training and test sets for all methods in this work, we apply TotalSegmentator~\cite{wasserthal2023totalsegmentator} to each CT volume to obtain a body mask and discard non-body regions. This removes the scanner table and other background objects, making CT slices visually closer to MRI and reducing the cross-modality appearance gap. This preprocessing step relies on an off-the-shelf tool and is applied only once per scan. It is therefore relatively cheap and unlikely to become a computational bottleneck.  

It is worth noting that, compared with large-scale natural image collections (e.g., LAION-5B~\cite{schuhmann2022laion}), publicly available MRI/CT datasets are much smaller and often exhibit modality-specific artifacts (e.g., low-dose noise, motion artifacts, and low spatial resolution). These factors limit the overall image quality and pose challenges for designing and training high-capacity deep models. To better match, as far as possible, the training scale of DINOv2 on natural images, we aggregate MRI/CT scans from multiple public datasets. On the other hand, medical images typically contain more structured and recurring anatomical patterns (e.g., organs and bones) than natural images, which may reduce the amount of data required to learn useful representations; quantifying this effect is an interesting direction for future work. Below, we introduce each dataset individually.
\begin{figure*}[t!] 
    \centering
    \includegraphics[width=1.\linewidth]{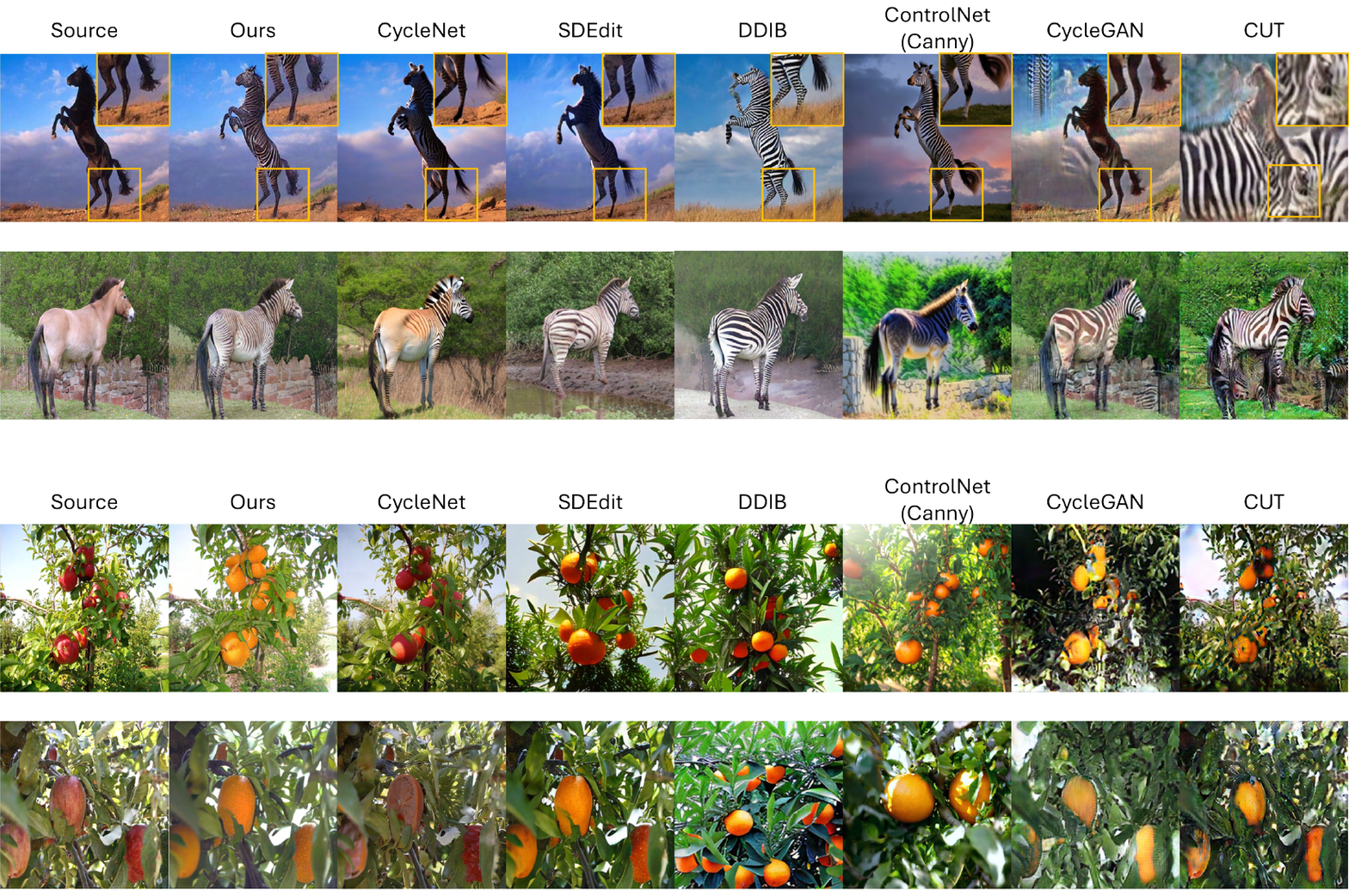}
    \vspace{-0.5em}
    \caption{\textbf{Visual Results.} Qualitative results on natural image translation. We show unpaired horse→zebra and apple→orange examples comparing SSB to prior GAN- and diffusion-based baselines. Our method achieves strong appearance transfer (striping / color change) while better preserving object pose, background layout, and fine structural details. The SSB results are obtained by fine-tuning SiT-XL/2.}
\vspace{-1em}    
    \label{fig:main_results_horsezebra}
\end{figure*}

\noindent\textbf{SynthRAD2023.} 
The SynthRAD2023~\cite{thummerer2023synthrad2023} radiotherapy dataset provides paired multimodal images (MRI–CT and CBCT–CT) for brain and pelvic treatment sites from three Dutch university medical centers. Various scanner models and acquisition settings were used across patients at these centers. In total, we selected a subset of 350 MRI–CT patient cases that are relevant to our study. The subset consists of approximately two-thirds T1-weighted spoiled gradient-echo sequences and one-third T2-weighted fast spin-echo sequences, with no contrast used for either MRI or CT. We select about 20 subjects for evaluation and treat the rest as training  dataset.

\noindent
\textbf{SynthRAD2025.}
Similarly, SynthRAD2025~\cite{thummerer2025synthrad2025} benchmarks synthetic CT generation for MRI- and CBCT-based radiotherapy workflows. The dataset contains MRI-to-CT pairs for MR-only and MR-guided photon/proton radiotherapy (890 MRI–CT pairs) and CBCT-to-CT pairs for daily adaptive workflows (1,472 CBCT–CT pairs) from patients who underwent radiotherapy in the head-and-neck, thorax, or abdomen. The population is predominantly adult, with no gender restrictions applied during data collection, and the images are aggregated from five international centers. In this work, we focus on the MRI-CT subset and use approximately 637 publicly available MRI–CT cases that match our training experimental design and 30 for testing.

\noindent\textbf{Multi-organ Abdominal Collection (AMOS).} AMOS~\cite{ji2022amos} is a multi-modal dataset from Longgang District People's Hospital, featuring 500 CT and 100 MRI scans from 600 patients with abdominal abnormalities. Acquired across eight different scanner platforms, the dataset provides annotations for 15 anatomical structures in CT (spleen, right kidney, left kidney, gallbladder, esophagus, liver, stomach, aorta, inferior vena cava, pancreas, right adrenal gland, left adrenal gland, duodenum, bladder, and prostate/uterus) and 13 structures in MRI (excluding bladder and prostate in some cases). We use both modalities for training dataset construction.

\noindent\textbf{Whole-body PET/CT Collection (AutoPET).} AutoPET~\cite{gatidis2022whole} comprises 1,014 whole-body FDG-PET/CT studies, balanced between 501 cases with confirmed malignancies (lymphoma, melanoma, NSCLC) and 513 negative control cases. All scans include both PET and CT modalities with annotations for malignant lesions. We use the CT images for our training dataset.

\noindent\textbf{TotalSegmentator.} TotalSegmentator~\cite{wasserthal2023totalsegmentator} provides comprehensive whole-body segmentation datasets. The CT portion comprises 1,229 scans with annotations for 104 anatomical structures spanning all major body regions, organs, vessels, and skeletal structures. The MR portion includes 299 scans with annotations for 50 anatomical structures. Both datasets aggregate scans from multiple sources and institutions, providing diverse imaging protocols and patient populations. We adapt both modalities for unpaired training dataset construction.

\noindent
\textbf{IXI.}
The IXI dataset~\citep{ixi_dataset} contains nearly 600 brain MR scans from healthy subjects, acquired at three London hospitals (Philips 3T, Philips 1.5T, GE 1.5T).  For each subject, T$_1$-, T$_2$- and PD-weighted images, MRA, and diffusion-weighted images are available.  Following prior work, we chose a subset of IXI, containing about 300 subjects with T$_1$/T$_2$/PD images. We exclude boundary slices that contain no brain tissue from each volume. 

\noindent
\textbf{Pelvic MRI/CT Dataset.}
This Pelvic MRI/CT~\cite{nyholm2018mr} dataset contains pelvic T$_1$-, T$_2$-weighted MRI and CT scans from 15 subjects. T$_1$ images were acquired with TE $\approx$ 4.8--7.2\,ms and TR $\approx$ 500--746\,ms at either $0.88\times0.88\times3$\,mm$^{3}$ or $1.10\times1.10\times2$\,mm$^{3}$ resolution; 
T$_2$ images with TE $\approx$ 91--97\,ms, TR $\approx$ 6000--16000\,ms and in-plane resolution 0.88--1.10\,mm at 2.5\,mm slice thickness. 
CT scans have $0.10\times0.10\times2$--3\,mm$^{3}$ resolution with reconstruction kernels B30f or FC17. 

\noindent
\textbf{UKBB.} UK Biobank’s imaging enhancement~\cite{littlejohns2020uk} is a large population study, each undergoing brain, cardiac, and abdominal/whole-body MRI, plus DXA and carotid ultrasound. The abdominal and whole-body MRI are acquired on a 1.5T system using multi-station Dixon sequences and additional liver/pancreas protocols. These body scans provide volumetric measures of visceral and subcutaneous fat, muscle volume, and organ morphology, and are explicitly designed as a gold standard for body composition and ectopic fat quantification. Since the UKBB whole-body MRI dataset exhibits different appearance and resolution compared to the in-domain SynthRAD2023 and SynthRAD2025 data, we treat it as an out-of-domain MRI dataset. We select a small subset of UKBB whole-body MRI with both fat-only and water-only Dixon reconstructions, containing about 100 subjects each. For each volume, we extract 5 axial slices spanning from the mid-thigh to the chest.  

\noindent\textbf{In-domain vs.\ Out-of-domain (Empirical Justification).}
Although all models in our comparison are trained in an unpaired fashion, the underlying MRI and CT datasets differ substantially in their acquisition compatibility. The SynthRAD2023/2025 MRI and CT volumes were originally collected for supervised MRI$\rightarrow$CT research and therefore share similar spatial resolution, anatomical framing, and pre-registered geometry. As a result, the marginal distributions of SynthRAD MRI and CT are naturally closer. Empirically, the Fréchet Inception Distance (FID) between SynthRAD MRI and SynthRAD CT is 120.7. In contrast, UKBB fat- and water-suppressed MRI have different contrast mechanisms, intensity statistics, and voxel sizes, and are not geometrically aligned to any CT dataset we use. Consequently, their distributional distance to SynthRAD CT is substantially larger (FID of 177.2 and 191.6, respectively). Even UKBB fat- and water-suppressed MRIs differ from each other (FID 69.6), highlighting their contrast heterogeneity.
\begin{table}[t]
\caption{\textbf{FID distances between MRI and CT datasets.} 
U-FAT = UKBB fat-suppressed MRI; U-WATER = UKBB water-suppressed MRI; 
S-MRI = SynthRAD2023/2025 MRI; S-CT = SynthRAD2023/2025 CT.
SynthRAD MRI is substantially closer to SynthRAD CT (FID 120.7) than UKBB MRI (FID 177–192), supporting our definition of SynthRAD MRI as “in-domain’’ and UKBB MRI as “OOD’’ with respect to CT acquisition.}
\centering
\small
\resizebox{0.42\textwidth}{!}{
\begin{tabular}{lcccc}
\toprule
\textbf{FID Dist.$\downarrow$} & \textbf{U-FAT} & \textbf{U-WATER} & \textbf{S-MRI} & \textbf{S-CT} \\
\midrule
\textbf{U-FAT}   & 0     & 73.5  & 97.6  & 177.2 \\
\textbf{U-WATER} & ---   & 0     & 110.4 & 191.6 \\
\textbf{S-MRI}   & ---   & ---   & 0     & 120.7 \\
\textbf{S-CT}    & ---   & ---   & ---   & 0     \\
\bottomrule
\end{tabular}
}
\vspace{-1em}
\label{tab:fid_mri_ct}
\end{table}

We therefore use “in-domain MRI’’ to denote MRI acquisitions whose marginal distribution is compatible with the CT dataset used in evaluation (as in SynthRAD), and “out-of-domain MRI’’ to denote acquisitions that differ strongly in resolution and contrast (as in UKBB). This distinction reflects data compatibility, not supervision. On the other hand, this combination of in-domain and out-of-domain datasets can also be viewed as a form of multi-contrast MRI translation, where diverse MRI contrasts (e.g., T1, T2, water–fat images) are translated into a unified CT domain.

\noindent
\subsection{Natural Images} 

In this section, we introduce the datasets used to finetune the SiT class-label ImageNet flow transformers and the text–image SD3-M transformers employed in our main paper. For SiT, we follow~\citep{yu2025representation} and first resize ImageNet-1K~\cite{imagenet15russakovsky} images to $256\times256$ and encode them into a VAE latent space of spatial size $32\times32$ with 4 channels. ImageNet-1K spans 1000 object classes and contains 1{,}281{,}167 training images.

For SD3-M finetuning, we adopt LAION-POP and a high-resolution, non-overlapping subset of LAION-2B, both derived from LAION-5B~\cite{schuhmann2022laion}. LAION-POP contains approximately 600{,}000 high-resolution images emphasizing popular text-to-image concepts and is heavily filtered for high aesthetic scores and a minimum resolution of 768\,px on the shorter side, making it suitable for finetuning text–image models. We then sample an additional subset from LAION-2B by requiring the shorter side to exceed 800\,px and enforcing non-overlap with LAION-POP. Together, we choose about 1.2M high-quality text–image pairs in total.
\begin{algorithm}[htb] 
\caption{Simplified SSB for Conditional I2I Translation and Editing}
\label{alg:mixed-rev-ode}
\begin{algorithmic}[1]
\REQUIRE 
Source $\vx^{(j)}_0$; encoder $E_\phi$; decoder $D_\varphi$; inverter $\vz^{\mathrm{inv}}_T$; drifts $v^{(i)}_\theta, {\color{gray}v^{(j)}_\theta}$; grid $\{t_k\}_0^N$; $p_w$; $\eta_t$.
\ENSURE Translated latent $\bar\vz^{(i)}_0$; .

\STATE $\vy \leftarrow E_\phi(\vx^{(j)}_0)$ \textbf{or} {\color{gray}$\vz^{\mathrm{inv}}_T(\vx^{(j)}_0)$}
\STATE $\vz_T \leftarrow \vy$
\STATE $\vz^{(i)}_{t_0}, {\color{gray}\vz^{(j)}_{t_0}, \tilde\vz^{(i)}_{t_0}} \leftarrow \vz_T$

\FOR{$k=0$ \TO $N-1$}
    \STATE $t \leftarrow t_k,\;\Delta t \leftarrow t_{k+1}-t_k$
    
    \STATE $\texttt{cfg\_on} \leftarrow (s>1)\wedge(t\in[t_{\min},t_{\max}])$
    \IF{$\texttt{cfg\_on}$}
        \STATE $d_t^{(i)} \leftarrow v^{(i)}_{\text{uncond}} + s\,(v^{(i)}_{\text{cond}} - v^{(i)}_{\text{uncond}})$
    \ELSE
        \STATE $d_t^{(i)} \leftarrow v^{(i)}(t, \vz_t^{(i)}, \vz_T)$ 
    \ENDIF

    \STATE {\color{gray} $d_t^{(j)} \leftarrow v^{(j)}(t, \vz_t^{(j)}, \vz_T)$ \hfill (Optional)}
    \STATE {\color{gray} $d\tilde{\vz}_t^{(i)} \leftarrow d_t^{(i)} + \eta_t(d_t^{(j)} - d_t^{(i)})$ \hfill (Optional)}
    
    \STATE $\vz_{t_{k+1}}^{(i)} \leftarrow \vz_{t_k}^{(i)} + \Delta t \cdot d^{(i)}_t$
    \STATE {\color{gray} $\{\vz^{(j)}, \tilde{\vz}^{(i)}\}_{t_{k+1}} \leftarrow \{\vz^{(j)}, \tilde{\vz}^{(i)}\}_{t_k} + \Delta t \cdot \{d^{(j)}, d\tilde{\vz}^{(i)}\}_t$ \hfill (Optional)}
\ENDFOR

\RETURN~~$\bar{\vx}^{(i)}_0 \leftarrow D_\varphi(\tilde{\vz}^{(i)}_{t_N})$ (or $D_\varphi(\vz^{(i)}_{t_N})$ if source model ${\color{gray}v^{(j)}_\theta}$ is not used)
\end{algorithmic}
\end{algorithm}

\section{Additional Implementation Details}
Below we provide additional implementation details for our method and the baseline models that are obmitted from the main paper due to space limitations.~\emph{It is worth noting that in our work we do not explicitly tune the endpoint Gaussian variance parameter $b$, which appears in $\vz_T=\vy\sim \gN(E_\phi(\vx_0), b^2\mathbf{I})$}. We include the notation for $b$ primarily to~\emph{highlight the conceptual distinction} between different bridge constructions discussed in the main paper—namely, the medical image translation setting, where geometric alignment is strong, versus the natural image generation setting, which involves richer semantic ambiguity. All baseline methods and our own models are trained on a cluster equipped with AMD MI300 GPUs.

\subsection{DINOv2 Pre-training for MRI-CT Translation}
\label{Sec:dinov2_medical}
In this section, we briefly describe the MRI–CT augmentation pipeline used to train the DINOv2 ViT-B/8 encoder (Fig.~\ref{fig:augmentation-pipline}). As noted in the Remark in the main paper, our objective is to construct an augmentation strategy such that, at convergence, the vision encoder becomes largely invariant to appearance variations while retaining stable geometric structure. To achieve this, apart from commonly used augmentations, we incorporate a retinal-inspired contrast filter implementing a spatial–temporal Gaussian center–surround mechanism (a weighted Difference-of-Gaussians). This filter suppresses low-frequency appearance cues and enhances high-frequency structural information, providing geometry-preserving augmentations that are suitable for contrast-agnostic MRI–CT representation learning. The filtering effect on paired MRI-CT is introduced in Fig.~2 (\emph{bottom}). The appearance gap between CT-MRI is largely reduced by run few iterations of the filter. Following the original DINOv2 training settings, we set the batch size to 512. The final model is obtained in approximately one day, requiring over 100,000 iterations.

\noindent
\textbf{Retina-Inspired Filter.}
Early retinal circuitry is well-approximated by a center--surround receptive field, in which a narrow positive ``center'' is opposed by a broader negative ``surround.'' This structure functions as a spatial band-pass operator that enhances local contrast and boundaries while suppressing slowly varying background intensity. Formally, given an image $f(x,y)$, we define a retinal-inspired response
\begin{equation}
R(x,y;t)
=
\big(G_{\sigma_c} * f\big)(x,y)
\;-\;
w_s\,\big(G_{\sigma_s(t)} * f\big)(x,y),
\label{eq:retina}
\end{equation}
where $G_\sigma$ is an isotropic Gaussian kernel of standard deviation $\sigma$, $\sigma_c$ is the fixed center scale, and $\sigma_s(t) > \sigma_c$ is a surround scale that increases with a time-like parameter $t$. Larger~$t$ produces a broader surround and thus a stronger center--surround effect. 
\begin{figure}[t] 
    \centering
    \includegraphics[width=0.75\linewidth]{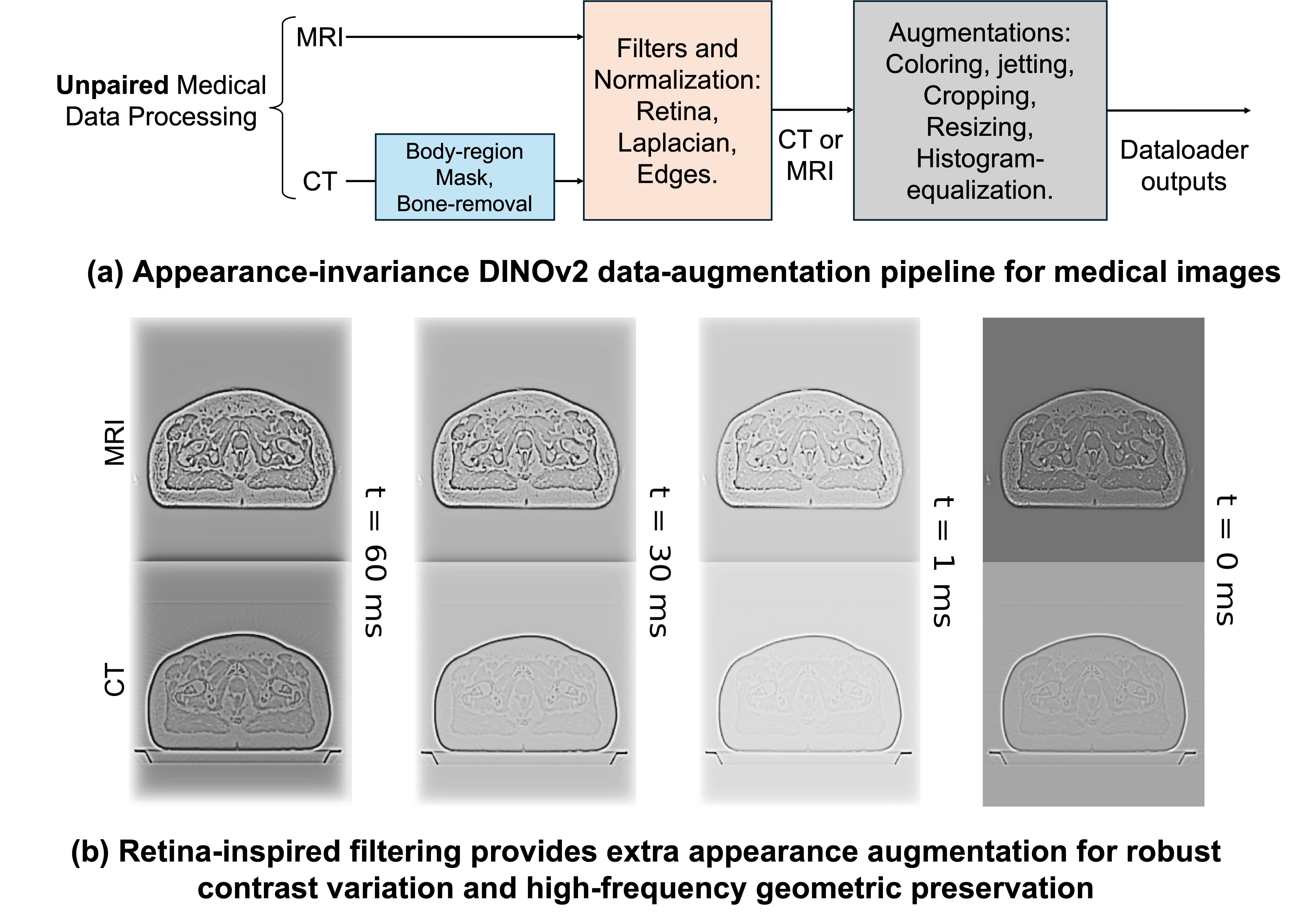}
    \caption{Data augmentation pipeline used for MRI-CT DINOv2
vision encoder. We adapt the widely used spatial-temporal retinainspired filter for aligning MRI-CT appearance gap as illustrated in the bottom of this figure.}
    \vspace{-1em}
    \label{fig:augmentation-pipline}
\end{figure}

\paragraph{Practical Note.}
In our preprocessing, we use \texttt{cv::bioinspired::Retina}. The parvocellular (detail) output of this model behaves consistently with the formulation above and can be interpreted as snapshots of $(\varphi_t * f)$ evaluated at increasing effective times~$t$. In this implementation, running the same input frame for $m$ internal iterations corresponds to evaluating the WDoG at a later time index, as each iteration expands the surround via a diffusion-like update. Typical retinal settings follow $\sigma_s \approx 3\sigma_c$ with positive center and surround gains $(w_c, w_s)$; we adopt the library’s default configuration.

\subsection{Medical I2I Translation Task---Bridge Model Training for CT Images }
\label{Sec:bm_medical}

In this section, we provide additional details on training the diffusion bridge for MRI→CT translation. MRI and CT exhibit strong geometric consistency and low semantic ambiguity, so the vision encoder $E_\phi$ already produces a shared latent space. This allows us to train a \emph{single} bridge model with $b=0$, yielding $\vz_T=\vy=E_\phi(\vx_0)$. We use a KL-VAE~\cite{rombach2022high} with $64\times64$ spatial resolution and 8 latent channels, trained on the same combined MRI/CT dataset. In practice, since all exiting medical SOTA baselines are implemented based on UNet diffusion model~\cite{ho2020denoising}, so we adapt the same type of UNet architecture. Our final bridge operates on $64\times64\times 8$ latents. Following standard bridge constructions~\cite{zhou2024denoising, pmlr-v202-liu23ai, chadebec2025lbm}, we concatenate endpoints as UNet conditioning and apply a top-16 PCA channel sampling (channel-wise shuffle) at each iteration to encourage appearance-invariant learning and improve robustness across MRI contrasts. We set the training batch size to 128 and Adam optimizer with base lr=$1e-5$, and the final model used in this work is obtained in approximately one day.

\subsection{Text-free I2I Translation Task---SiT Transformers Finetuning.}
\label{Sec:sit_sec}
In this experiments, we further study SSB in a lower-capacity setting by fine-tuning it on class-conditional flow transformers. Specifically, we use the pretrained 256$\times$256 ImageNet-1K SiT-XL/2 model~\cite{ma2024sit, yu2024representation}, trained as a rectified flow with a linear interpolant schedule and a $32\times 32$ KL-VAE latent grid with 4 channels. To match the Gaussian prior of the rectified-flow formulation, we set $b=1$ during SSB adaptation, resulting in the endpoint distribution $\vz_T = \vy \sim \mathcal N(E_\phi(\vx_0),\,\mathbf I)$. To align with SiT’s 4-channel latent space, we construct $\vz_T$ by average-pooling the top 16 PCA components of the DINOv3 embedding into a 4-channel tensor of size $32\times32\times4$. These pooled features serve as the terminal endpoint for the diffusion bridge. 

Following the stand bridge design~\cite{zhou2024denoising, zhang2024exploring, chadebec2025lbm}, we incorporate feature injection, allowing the drift network $v_\theta$ to receive auxiliary feature inputs from the source encoding $E(\vx_0^{(j)})$. This conditioning is trained in a classifier-free guidance manner~\cite{ho2021classifierfree}, allowing controllable translation strength. In specific, we inject PCA-based structural conditioning through a zero-initialized PatchEmbed module
$\mathrm{cond_embed}$, added to the SiT input tokens with a learnable scale $c$

\begin{equation}
x = \mathrm{x\_embed}(x) + \mathrm{pos}
+ c\,\underbrace{\mathrm{cond\_embed}(w \odot \mathrm{cond})}_{\text{zero-init}},
\label{eq:x-embedding}
\end{equation}
with $ w \sim \mathrm{Bernoulli}(p)$. Zero-init ensures a smooth transition from the pretrained SiT, while stochastic conditioning dropout
(probability $p_w=0.2$) prevents over-reliance on the PCA signal. This yields more stable training and produces a
flexible, geometry-aware conditioning mechanism for structure-preserving edits. Similar to the original SiT-XL/2 training protocols, we set the batch size to 512 using the AdamW optimizer with a base learning rate of $5 \times 10^{-7}$. The fine-tuning process proves highly efficient, yielding the final model in approximately 4 hours.
\begin{figure*}[t!] 
    \centering
    \includegraphics[width=0.98\linewidth]{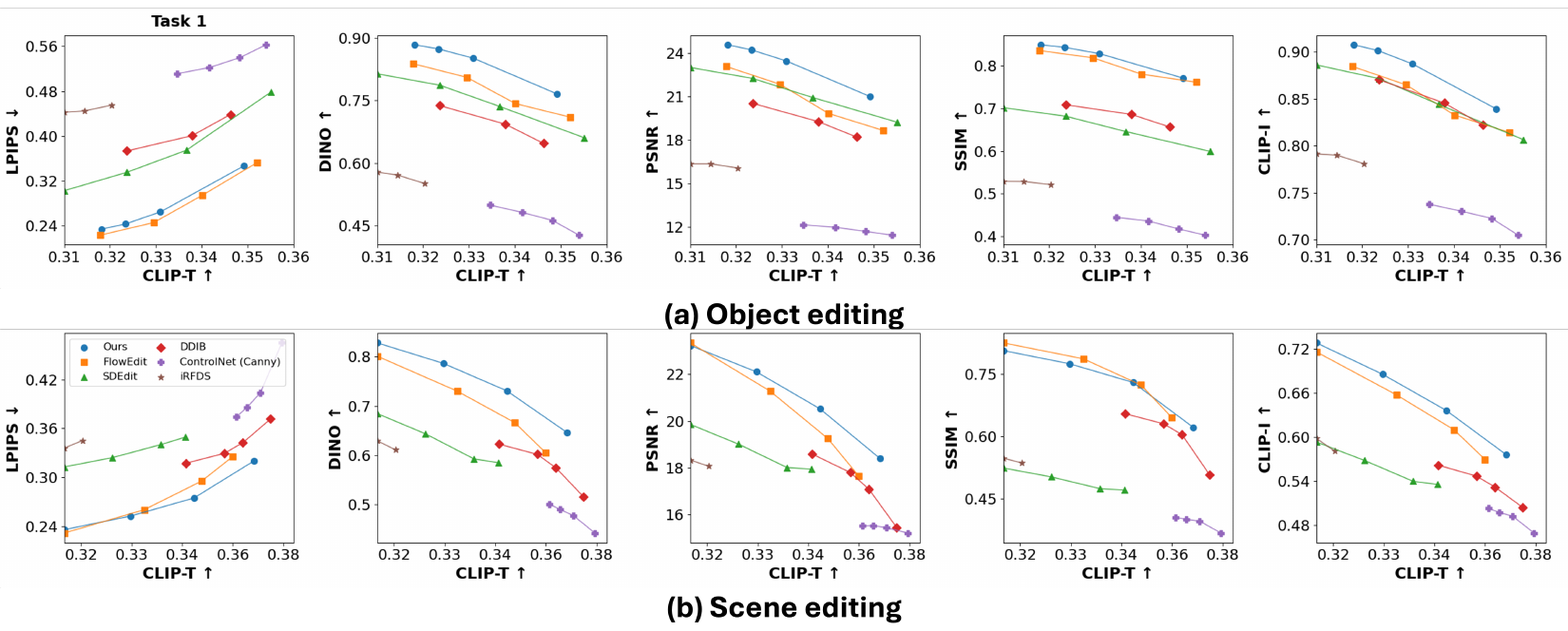}
    \vspace{-1em}
    \caption{Full trade-off curves complementing the quantitative results shown in the main paper Fig~6. As in the main text, we plot text alignment (CLIP-T, x-axis) against multiple fidelity and perceptual metrics (y-axis), including LPIPS, DINO, PSNR, SSIM, and CLIP-I. These expanded plots show the complete range of guidance settings used for each method. Consistent with the main-paper results, our method provides a favorable balance between text adherence and structural preservation, comparing competitively with—and in many cases outperforming—state-of-the-art baselines such as FlowEdit. In addition, all methods exhibit smooth and monotonic behavior as guidance strength varies, confirming the stability of our evaluation protocol. ControlNet (Canny) performs substantially worse due to its edge-only conditioning, which discards most semantic and photometric information. Structural fidelity for~\emph{scene-style} editing metrics (SSIM, PSNR, LPIPS) are computed on the luminance (YCbCr) channel to better reflect perceptual structure consistency. Connected markers correspond to different hyperparameter values for each baseline.}
    \label{fig:table2_appendix}
\end{figure*}

\subsection{Text-Guided Image Editing---SD3-M Transformer Finetuning}
\label{Sec:sd3_finetune}
We fine-tune SSB on the SD3-M text–image flow transformer~\cite{esser2024scaling}, which employs a 16-channel KL-VAE and preserves the original $1024\times1024$ resolution (yielding a $128\times128\times16$ latent). Following the rectified-flow setup, we set $b=1$ so the endpoint distribution aligns with the Gaussian prior, giving $\vz_T=\vy\sim\mathcal N(E_\phi(\vx_0),\mathbf I)$. Unlike the SiT setting, directly using the full top-16 DINOv3 PCA channels as endpoints severely degrades finetuning stability on SD3-M. We attribute this to the model’s more complex flow trajectories and the larger mismatch between its 16-channel VAE latent space and the DINOv3 embedding space. As a result, endpoints formed solely from DINO PCA features lie outside the native SD3-M latent manifold and destabilize SSB training. To address this mismatch, we construct a hybrid endpoint using channel-wise mixing between the two latent
encoders. Specifically, we sample a binary channel mask $m\in\{0,1\}^C$ and define
\[
E'_\phi(\vx_0)
= m \odot E_\varphi(\vx_0)
+ (1-m)\odot E_\phi(\vx_0),
\]
where $E_\varphi$ is the SD3-M VAE encoder and $E_\phi$ denotes the DINOv3 PCA features. This mechanism randomly replaces a subset of DINO PCA channels with the corresponding VAE latent channels, thereby anchoring the endpoint closer to the native SD3-M latent space while preserving the geometry-aware structure captured by DINO. We find this hybrid construction essential for stable and high-quality SSB adaptation on SD3-M. See additional discussions and experimental results in the next Section. Similar to SiT finetuning, in SD3-M, we also inject conditioning into several transformer blocks using standard zero-initialized residual adapters. This allows the SSB signal to influence deeper layers while preserving pretrained behavior at initialization. For SD3-M fine-tuning, we use a batch size of 256 and the AdamW optimizer with a learning rate of $1.2 \times 10^{-4}$. Similar to the SiT fine-tuning stage, convergence is rapid, requiring approximately 8 hours to obtain the final model.

\subsection{Parameter Settings of SSB at Inference }
In Table~\ref{tab:parameters_ours}, we summarize the hyperparameter settings used for MRI$\rightarrow$CT, horse$\rightarrow$zebra, text-guided scene editing, and object-editing experiments. As noted earlier, the MRI$\rightarrow$CT task does not require an inversion step, since our pre-trained DINOv2 encoder already provides a strong geometry-preserving representation of the source MRI, making direct conditioning sufficient.
For scene-level image editing, we prioritize more dramatic global appearance changes. In this setting, model inversion offers limited benefit because the structural prior encoded by $E_\phi$ already retains rich spatial information while largely disentangling appearance. This enables effective scene translation without relying heavily on inversion, leading to diverse scene level translation, whereas object-level editing—where fine-grained control is needed—may benefit more from it. Following the hyperparameter configurations detailed in Table~\ref{tab:parameters_ours}, we present our practical reverse ODE sampler based on Euler discretizations in Algorithm~\ref{alg:mixed-rev-ode}. 

\subsection{Baseline Methods Implementations}
\label{Sec:baseline_implt}
In this section, we briefly introduce each baseline methods used in the experiments and their implementation details. 
\begin{table*}[t]
\caption{Control strengths for inversion/editing baselines (unified across each method’s original notation). For SDEdit, FlowEdit, and DDIB, a larger strength corresponds to weaker preservation of the source structure, whereas for ControlNet the relationship is reversed.}
\centering
\resizebox{0.9\textwidth}{!}{
\begin{tabular}{lccccc}
\toprule
\textbf{Methods} &
\textbf{$T$ steps} &
\textbf{$t_{\text{end}}$ (Control Strength*)} &
\textbf{$n_{\text{avg}}$} &
\textbf{CFG @ source} &
\textbf{CFG @ target} \\
\midrule
SDEdit~\cite{meng2022sdedit} & 75 & 
0.2,\; 0.3,\; 0.4,\; 0.5,\; 0.6,\; 0.75
& 1
& -- & 7.5,\; 13.5,\; 16.5 \\
DDIB~\cite{su2022dual} & 200 & 0.7, 0.8, 0.9 & 1 & 1,\; 3.5 & 13.5,\; 16.5,\; 19.5 \\
FlowEdit~\cite{kulikov2024flowedit} & 50 & 
0.5,\; 0.56,\; 0.6,\; 0.66,\; 0.7,\; 0.8
& 1, 3, 5
& 1,\; 3.5 & 13.5,\; 16.5,\; 19.5 \\
ControlNet~\cite{zhang2023adding} & 75 & 
0.2,\; 0.5,\; 0.7,\; 0.8,\; 0.9 
& 1
& -- & 7.5,\; 13.5,\; 16.5 \\
iRFDS~\cite{yang2025texttoimage} & 1000, 1400, 1800 & 
--
& 1
& -- & official hyperparameters \\
\bottomrule
\end{tabular}
}
\label{tab:baseline_parameters}
\end{table*}

\begin{table*}[t]
\caption{
SSB hyperparameter configurations across different I2I translation tasks (medical and natural images). 
We utilize \textbf{standard diffusion hyperparameters} (e.g., inversion steps, control strength) consistent with prior I2I methods, adapting settings like stochasticity ($\bm{p_w}$) and guidance strength solely based on the task's intrinsic ambiguity. 
This demonstrates that SSB is a general framework achieving strong performance across domains using established configurations, without relying on complex, task-specific heuristics.
}
\centering
\resizebox{0.9\textwidth}{!}{
\begin{tabular}{llccccc}
\toprule
\textbf{Task} & \textbf{Setting} & \textbf{Inversion Step} & $\bm{T}$ \textbf{steps} & $\bm{t_{\text{end}}}$ (\textbf{Control Strength*}) & $\bm{p_{w}}$ \textbf{in Eq.}~\eqref{eq:x-embedding} & \textbf{CFG @ target} \\
\midrule

\multirow{1}{*}{\textbf{Medical I2I}} 
& MRI$\rightarrow$CT & -- & 50 & -- & -- & -- \\
\midrule

\multirow{2}{*}{\textbf{Text-free I2I}}
& Horse$\rightarrow$Zebra  & 100 & 50 & 0.2, 0.4, 0.7 & 0.5, 0.7 & 3, 4.5 \\
& Apple$\rightarrow$Orange & 100 & 50 & 0.2, 0.4, 0.7 & 0.5, 0.7  & 3, 4.5 \\
\midrule

\multirow{2}{*}{\textbf{Text-guided I2I}}
& Scene editing  & -- & 75 & 0.4, 0.7, 1 & 0, 0.2, 0.5, 0.7 & 4.5, 7.5, 13.5 \\
& Object editing & 75 & 75 & 0, 0.2, 0.4, 0.7 & 0, 0.2, 0.5, 0.7 & 4.5, 7.5, 13.5 \\
\bottomrule
\end{tabular}
}
\label{tab:parameters_ours}
\end{table*}

\noindent
\textbf{GAN-based Methods.}
We include classical GAN-based I2I baselines—CycleGAN~\citep{zhu2017unpaired}, UNIT~\citep{liu2017unsupervised}, and CUT~\citep{park2020contrastive}—to provide a complete comparison. CycleGAN and UNIT are trained on our MRI→CT data, while CUT is evaluated on the official horse→zebra model and fine-tuned for apple→orange. Because these architectures have much smaller capacity than modern diffusion or flow-based models, we construct a fair unpaired MRI/CT setting by sampling 5k MRI and 5k CT images from SynthRAD2023/2025 and another 5k from the remaining data, ensuring comparable training conditions without domain coupling.

\noindent
\textbf{Supervised Diffusion Bridge-based Methods.}
Since the MRI-CT dataset used in this paper is originally released for supervised training design, with pre-registered MRI-CT pairs, we compare our methods with

\noindent
\textbf{Cycle-consistency Diffusion-based Methods.}
We compare against SynDiff~\citep{ozbey2023unsupervised}, a recent cycle-consistent diffusion–GAN method tailored for MRI→CT translation. SynDiff is designed for pre-registered MRI/CT pairs (e.g., SynthRAD2023/2025), where MRI and CT share similar resolution and alignment, and it trains a separate model for each contrast (\emph{e.g.}, T1→CT, T2→CT). However, SynDiff is highly sensitive to domain coupling and does not train reliably on large-scale, heterogeneous, or unregistered data such as UKBB MRI→CT. In practice, it collapses on our full dataset, making a direct large-scale comparison infeasible and uninformative. To provide a fair comparison under its intended operating regime, we therefore construct a smaller, higher-quality unpaired subset by combining SynthRAD2023/2025 with an additional 10,000 unpaired MRI/CT images from the remaining training pool. 

CycleNet~\cite{xu2023cyclenet} has demonstrated strong performance on classical unpaired I2I benchmarks such as horse→zebra and apple→orange by fine-tuning pretrained text-to-image diffusion models (e.g., Stable Diffusion 2.1~\cite{rombach2022high}). For comparison, we use the officially released CycleNet model pretrained on horse→zebra and further fine-tune it on the apple→orange dataset~\cite{zhu2017unpaired} following its recommended protocol. This provides a comprehensive baseline representing the line of diffusion-based cycle-consistency methods. We follow the official implementations of the parameter settings for CycleNet.

\noindent
\textbf{Inversion-based Flow Methods.}
For natural image I2I, we evaluate inversion-based baselines including DDIB~\cite{su2022dual} and SDEdit~\cite{meng2022sdedit}. Both methods rely on a pretrained flow model and do not learn cross-domain mappings; accordingly, we use the same official SiT model as their backbone, identical to ours. For text-guided I2I editing, we use the official SD3-M model as the pretrained backbone for fair comparison. We additionally compare with FlowEdit~\cite{kulikov2024flowedit}, a recent SOTA, inversion-free editing method specifically designed for rectified-flow models, providing a strong baseline for text-guided structural transformations. In Table~\ref{tab:baseline_parameters}, we show the hyperparameter settings used in this work for each above baselines.

\noindent
\textbf{ControlNet.} For all exaperiments used related to ControlNet, we adopt the pre-trained released in SD3-Controlnet-Canny~\citep{instantx_sd3_controlnet_canny} on huggingface. 

\subsection{Kernel Alignment Matrices}
\label{Sec:CKNNA}
\noindent
\textbf{CKNNA.}
Centered Kernel Nearest-Neighbor Alignment (CKNNA) is a relaxed variant of the standard Centered Kernel Alignment (CKA)~\citep{kornblith2019similarity}, designed to soften CKA’s strict global alignment criterion by focusing only on locally consistent neighborhoods. We adopt the notation and formulation introduced in Huh et al.~\citep{huh2024position}. We follow~\cite{yu2024representation} for the metrics implementation in Fig.3 in the main paper. In this paper, we randomly sample 1,000 paired MRI–CT slices from the training set and report CKNNA@$k=10$ computed on this subset and averaged across all layers of each image encoder; following Huh et al.~\cite{huh2024position}, smaller $k$ provides a more stringent assessment of local neighborhood consistency between representations.

\section{Additional Details to Related Work}
\noindent\textbf{Connecting Conditional SI, ECSI, and DDBM.} Beyond the vector field interpolation, one can also formulate conditional sampling from the SDE perspective in practice. As is well known~\citep{ho2020denoising, song2021scorebased, meng2022sdedit, rout2025semantic}, SDEs exhibit robustness to initialization, with stability proportional to the variance of the additive noise. In contrast, ODE trajectories initialized with corrupted or imperfect samples propagate these errors deterministically through the velocity field. By injecting stochasticity at every step, SDEs behave as Markov processes that converge toward the intended invariant distribution, thereby reducing sensitivity to initialization and improving the reliability of conditional generation.

\noindent
We define the reverse SDE that shares the conditional marginals $p(\vz_t \mid \vz_T)$ with the PF-ODE as
\begin{equation}
d\vz_t = \big[v(t,\vz_t,\vz_T) - g_ts(t,\vz_t,\vz_T)\big]dt + \sqrt{2g_t}d\vw_t,
\label{eq:rev-sde-generic}
\end{equation}
where $g_t>0$ is the diffusion coefficient, $s(t,\vz_t,\vz_T)=\nabla_{\vz}\log p(\vz_t \mid \vz_T)$ is the conditional score, and $\vw_t$ is a standard Wiener process. Setting $g_t \equiv  0$ recovers the deterministic PF-ODE. Conditioned on the endpoint and using Tweedie’s identity, the score is
\begin{equation}
s(t, \vz, z_T) := \nabla_\vz \log p(\vz \mid \vz_T)
    = -\tfrac{1}{\gamma_t}\,\mathbb E[\beps \mid \vz_t=\vz, \vz_T].
\end{equation}
By combining this score identity with the stochastic interpolant parameterization, we obtain a closed-form correspondence between score and velocity as 
\begin{align}
v(t,\vz,\vz_T)
&= \E\!\left[\,\partial_t I(t,\vz_0,\vz_T)\mid \vz_t=\vz,\,\vz_T\,\right] \notag \\
&\quad -\; \dot\gamma_t \gamma_t \, s(t,\vz,\vz_T).
\label{eq:score-vector}
\end{align}
We now show that the velocity--score reverse SDE for 
SI~\cite{albergo2023stochastic} coincides with the 
ESCI~\cite{zhang2024exploring} formulation, which in turn 
is equivalent to diffusion bridge models (DDBM)~\cite{zhou2024denoising} under linear gaussian defined in the main paper and restated below.

\begin{table}[t]
    \centering
    \caption{\textbf{Efficiency Comparison} on medical MRI$\rightarrow$CT $256\times256$ images. All methods use a \textbf{UNet} backbone. We evaluate SDEdit, DDIB, and DDBM using our own pre-trained diffusion models within their official frameworks. For Syndiff, SelfDRB, and I$^{2}$SB, we utilize their official implementations.}
    \label{tab:efficiency_supp}
    \resizebox{0.6\columnwidth}{!}{
    \begin{tabular}{l c c c}
    \toprule
    \multicolumn{4}{c}{\textbf{Backbone: UNet-based~\cite{ronneberger2015u, dhariwal2021diffusion}}} \\
    \midrule
    \textbf{Methods} & \textbf{NFE}$\downarrow$ & \textbf{Params} & \textbf{Inference Time (s/image)}$\downarrow$ \\
    \midrule
    SDEdit~\cite{meng2022sdedit}   & 400  & 552.8M & 8.03 \\
    DDIB~\cite{zheng2025diffusion} & 1000 & 552.8M & 20.15 \\
    Syndiff~\cite{ozbey2023unsupervised} & 4    & 156.1M & 0.13 \\
    DDBM~\cite{zhou2024denoising}  & 300  & 489.7M & 5.21 \\
    I$^{2}$SB~\cite{pmlr-v202-liu23ai} & 1000 & 552.8M & 24.97 \\
    SelfDRB~\cite{arslan2025self}  & 20   & 46.9M  & 0.29 \\
    SSB (ours)                     & 150  & 489.7M & 2.57 \\
    \bottomrule
    \end{tabular}
    }
\label{tab:mri-run}    
\end{table}

\begin{table}[t]
\centering
\caption{\textbf{Efficiency Comparison} on text-free I2I translation with \textbf{Transformer-based diffusion denoisers}. Methods differ in base architectures (e.g., SiT/SD/SD3 backbones), reflecting the recent shift from U-Net to Transformer denoisers. We use the official implementation of \href{https://github.com/sled-group/CycleNet}{CycleNet} and adapt publicly available weights for \href{https://huggingface.co/InstantX/SD3-Controlnet-Canny}{ControlNet-Canny}.}
\label{tab:efficiency_breakdown}
\resizebox{0.8\linewidth}{!}{%
\begin{tabular}{lcccc}
\toprule
\textbf{Methods} & \textbf{Base-model} & \textbf{NFE}$\downarrow$ & \textbf{Inference Time (s/image)} $\downarrow$ & \textbf{Parameters} \\
\midrule
SDEdit~\cite{meng2022sdedit}     & SiT-XL/2~\cite{yu2024representation} & 400  & 9.22  & 675M \\
DDIB~\cite{su2022dual}          & SiT-XL/2~\cite{yu2024representation} & 1000 & 22.06 & 675M \\
CycleNet~\cite{xu2023cyclenet}  & SD-2.1~\cite{rombach2022high}   & 200  & 5.21  & 865.91M \\
ControlNet~\cite{zhang2023adding} & SD3-M~\cite{esser2024scaling}  & 150  & 5.75  & 2.02B \\
SSB (Ours)                            & SiT-XL/2~\cite{yu2024representation} & 250  & 6.56  & 675M \\
\bottomrule
\end{tabular}}
\label{tab:nature-run}    
\end{table}

\begin{table}[b]
\centering
\caption{\textbf{Efficiency Comparison} on text-guided I2I translation and editing. We employ pre-trained public available SD3-M for all methods. Since the NFE varies due to hyperparameter settings (e.g., control strength) for all baselines, we omit a single NFE value and instead report the average running time across different settings to reflect the actual computational cost more accurately.}
\label{tab:runtime_simple}
\resizebox{0.5\linewidth}{!}{%
\begin{tabular}{lcc}
\toprule
\textbf{Methods} & \textbf{Inference Time (s/image)} $\downarrow$ & \textbf{Parameters} \\
\midrule
SDEdit~\cite{meng2022sdedit}     & 4.71  & \multirow{6}{*}{2.02B} \\
DDIB~\cite{su2022dual}          & 28.75 & \\
FlowEdit~\cite{kulikov2024flowedit} & 12.21 & \\
iRFDS~\cite{yang2025texttoimage}     & 44.82 & \\
ControlNet~\cite{zhang2023adding}    & 5.85  & \\
SSB (Ours)                             & 14.13 & \\
\bottomrule
\end{tabular}}
\label{tab:text-guide-run}    
\end{table}

For the linear--Gaussian SI kernel defined in the main paper, we have the form of 
\begin{equation}
\vz_t = \alpha_t \vz_0 + \beta_t \vz_T + \gamma_t \beps,
\qquad \beps \sim \mathcal N(0,I),
\end{equation}
applying Eq.~\eqref{eq:score-vector} the probability flow drift is
\begin{equation}
v(t,\vz,\vz_T)
= \dot\alpha_t \hat \vz_0 + \dot\beta_t \vz_T 
- \dot\gamma_t \gamma_t \, s_c(t,\vz,\vz_T),
\end{equation}
where we have $\hat \vz_0 = \mathbb E[\vz_0 | \vz_t, \vz_T]$ and 
$s(t,\vz,\vz_T) = \nabla_z \log \rho_t(\vz \mid \vz_T)$. By Tweedie’s identity~\cite{albergo2023stochastic} defined as
\begin{equation}
s_c(t,\vz,\vz_T) = -\tfrac{1}{\gamma_t}\,\hat \vu_t,
\qquad 
\hat \vu_t = \mathbb E[\beps \mid \vz_t,\vz_T].
\label{eq:Tweed}
\end{equation}
we directly plug-in Eq.~\eqref{eq:Tweed} to Eq.\eqref{eq:rev-sde-generic}, this lead to
\begin{equation}
d\vz_t = \Big(\dot\alpha_t \hat \vz_0 + \dot\beta_t \vz_T
+ (\dot\gamma_t + \tfrac{g_t}{\gamma_t}) \hat \vu_t \Big) dt
+ \sqrt{2g_t}\, d\bar \vw_t,    
\end{equation}
which is exactly the ECSI reverse SDE drift form, see Sec. D in the supplement in~\cite{zhang2024exploring}. Note that, the above conduct establish that 
\emph{conditional SI, ECSI, and DDBM describe the same endpoint-conditioned marginals}. 
The difference lies only in training targets:  
ECSI/DDBM typically predict the posterior mean $\hat \vz_0$, 
while FM-based models predict the velocity $v$.  

\section{Additional Experimental Results}
In this section, we show additional experimental results that obmitted from the main paper due to limited spacing. 

\noindent
\textbf{Natural Image translation Visual Results.} In Fig.~\ref{fig:main_results_horsezebra}, we present qualitative results for the horse$\rightarrow$zebra and apple$\rightarrow$orange tasks. Overall, our method produces the most structurally consistent translations compared with existing approaches on these two classical I2I benchmarks. 

\noindent
\textbf{Additional Quantitative Results.}
In Fig.~\ref{fig:table2_appendix}, we provide additional quantitative evaluations (e.g., SSIM, CLIP-I, and CLIP-T) that were obmitted from the main paper. These figures show the full trade-off curves corresponding to Fig.~6 of the main text. As before, we plot text alignment (CLIP-T, x-axis) against a variety of fidelity and perceptual metrics (y-axis), including LPIPS, DINO, PSNR, SSIM, and CLIP-I. The expanded plots reveal the complete range of guidance settings used for each method. Consistent with the main-paper results, our method achieves a highly favorable balance between text adherence and structural preservation, performing competitively with—and often surpassing—state-of-the-art approaches such as FlowEdit.

We additionally compare the efficiency of all baselines across different tasks in Table~\ref{tab:mri-run}, Table~\ref{tab:nature-run}, and Table~\ref{tab:text-guide-run}. Rather than relying solely on Number of Function Evaluations (NFE), we measure the actual inference time on a single NVIDIA GeForce H200 GPU with a batch size of $1$. The results demonstrate that our method achieves inference speeds overall comparable to strong prior baselines. For text-guided I2I tasks (Table~\ref{tab:text-guide-run}), where NFE varies significantly due to hyperparameter settings (e.g., control strength), we omit a single NFE value and instead report the average running time across different settings to reflect the actual computational cost more accurately.

\noindent
\textbf{Additional Qualitative Results.}
In Fig.~\ref{fig:appendix_ct_id}, Fig.~\ref{fig:appendix_ct_wat_ood}, and Fig.~\ref{fig:appendix_ct_fat_ood}, we show additional visual results for in-domain and OOD MRI$\rightarrow$CT translation. Fig.~\ref{fig:appendix_t2I_scene_final} presents additional scene-level editing results using SSB, while Fig.~\ref{fig:appendix_t2I_object} visualize additional object level editing results.

\section{Limitations and Future Works}

\noindent\textbf{Medical Images.}
While SSB demonstrates strong 2D fully self-supervised MRI$\rightarrow$CT translation, a current limitation is that our medical pipeline operates on \emph{slice-wise} inputs. This choice is primarily motivated by the substantial computational cost of training full 3D vision encoders and 3D bridge models at clinical resolution. As with all 2D diffusion and translation frameworks, this may introduce mild inter-slice inconsistencies due to the absence of explicit through-plane anatomical modeling. Nevertheless, all baselines are evaluated under the same 2D setting, ensuring fair comparison. Extending SSB to 3D—either via slice-regularization techniques or fully 3D encoders and bridge architectures—remains an important direction for future work.

\noindent\textbf{General Domains Image Translation and Editing.}
Our method inherits two fundamental limitations stemming from its use of a strong external structure prior and from the nature of cross-domain representation alignment. First, the reliance on a geometry-preserving prior means that the model is highly effective for appearance-level edits—such as color, style, or seasonal changes—where the underlying object shape and scene layout should remain stable. However, this same inductive bias makes the method less suitable for large semantic transformations that require substantial changes in global geometry or object morphology. As shown in Fig.~\ref{fig:appendix_limitations} (\emph{Top}), when the target prompt demands drastic reinterpretation (\emph{e.g.}, transforming a small lizard into a large dragon), the model exhibits an inherent trade-off: stronger edits relax the structure prior but may also distort the background, alter object pose, or disrupt the scene layout, while preserving the structure suppresses the desired transformation.

Second, our approach struggles when the source and target domains have extremely large representation gaps. Domains such as silhouettes, segmentation masks, sketches, or other abstract inputs lack the rich texture and depth characteristics present in natural images. Since the structure prior was trained on natural imagery, it establishes meaningful correspondences only when both domains share compatible geometric information. When applied to highly stylized or symbolic inputs with no clear semantic grounding, the model may fail to generate realistic outputs or may collapse into painterly or stylized renderings instead of photorealistic results as shown in Fig.~\ref{fig:appendix_limitations} (\emph{bottom}), silhouettes to several RGB image style translations. 

These limitations highlight directions for future work, including developing adaptive structure priors that modulate geometric constraints based on target semantics, or designing cross-domain regularizers and multi-modal pretraining strategies that bridge the gap between abstract representations and natural images, enabling more robust abstract-to-photo translation.

\newpage
\begin{figure*}[t!] 
    \centering
    \includegraphics[width=1.\linewidth]{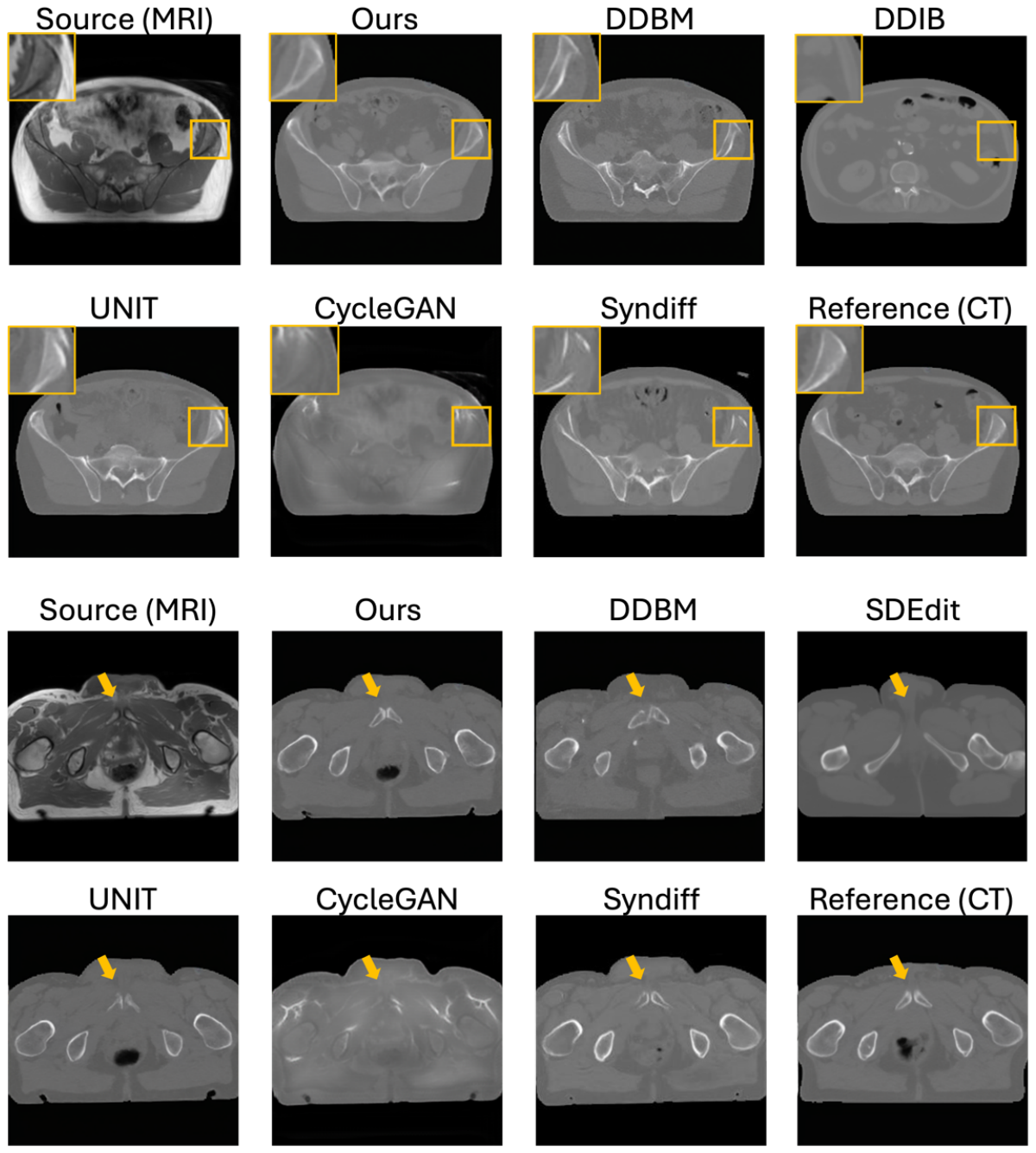}
    \vspace{-0.5em}
    \caption{\textbf{Additional Visual Results.} Additional qualitative results for in-domain MRI $\rightarrow$ CT translation. }
\vspace{-1em}    
    \label{fig:appendix_ct_id}
\end{figure*}

\newpage
\begin{figure*}[t!] 
    \centering
    \includegraphics[width=1.\linewidth]{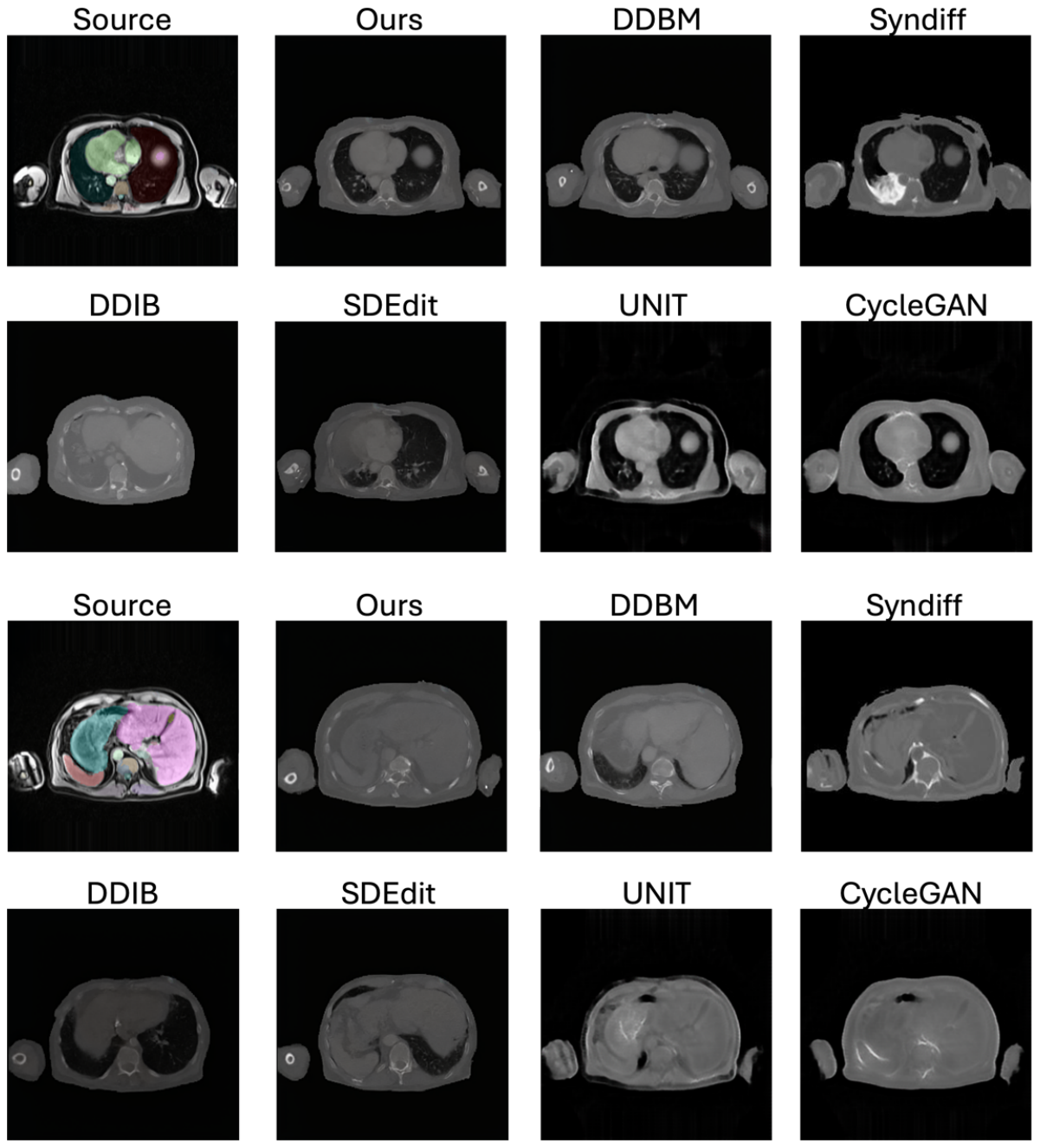}
    \vspace{-0.5em}
    \caption{\textbf{Additional Visual Results.} Additional qualitative results for out-of-distribution (OOD) MRI (UKBB whole-body water-series) $\rightarrow$ CT translation. Segmentation masks are overlaid on the MRI source images in OOD settings only to provide visual structural reference, since paired CT ground truth is unavailable. These masks are \emph{not} used during training or inference; they serve solely to illustrate anatomical fidelity without any segmentation supervision.}
\vspace{-1em}    
    \label{fig:appendix_ct_wat_ood}
\end{figure*}

\newpage
\begin{figure*}[t!] 
    \centering
    \includegraphics[width=1.\linewidth]{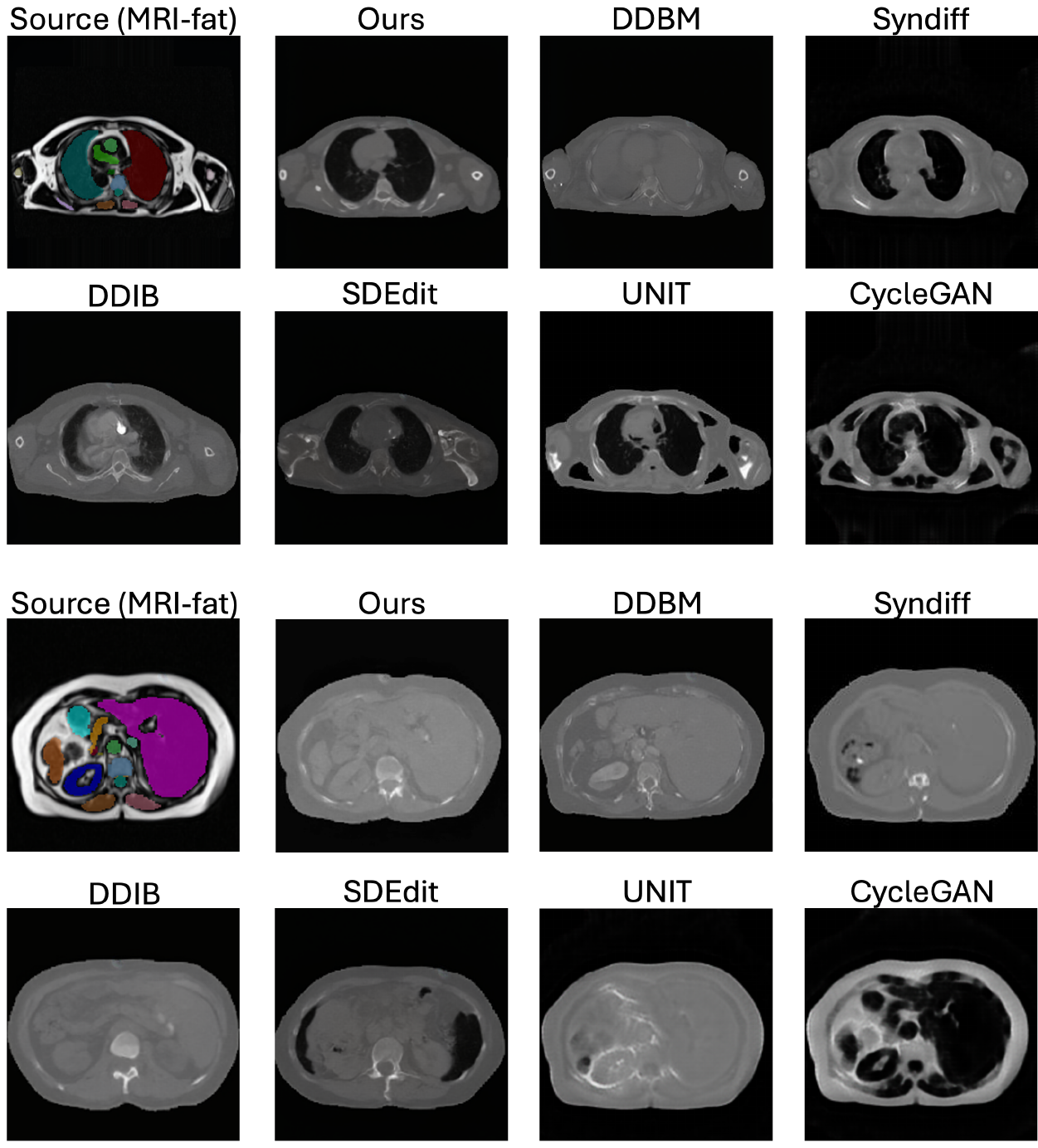}
    \vspace{-0.5em}
    \caption{\textbf{Additional Visual Results.} Additional qualitative results for out-of-distribution (OOD) MRI (UKBB whole-body fat-series) $\rightarrow$ CT translation. Segmentation masks are overlaid on the MRI source images in OOD settings only to provide visual structural reference, since paired CT ground truth is unavailable. These masks are \emph{not} used during training or inference; they serve solely to illustrate anatomical fidelity without any segmentation supervision.}
\vspace{-1em}    
    \label{fig:appendix_ct_fat_ood}
\end{figure*}

\newpage
\begin{figure*}[t!] 
    \centering
    \includegraphics[width=1.\linewidth]{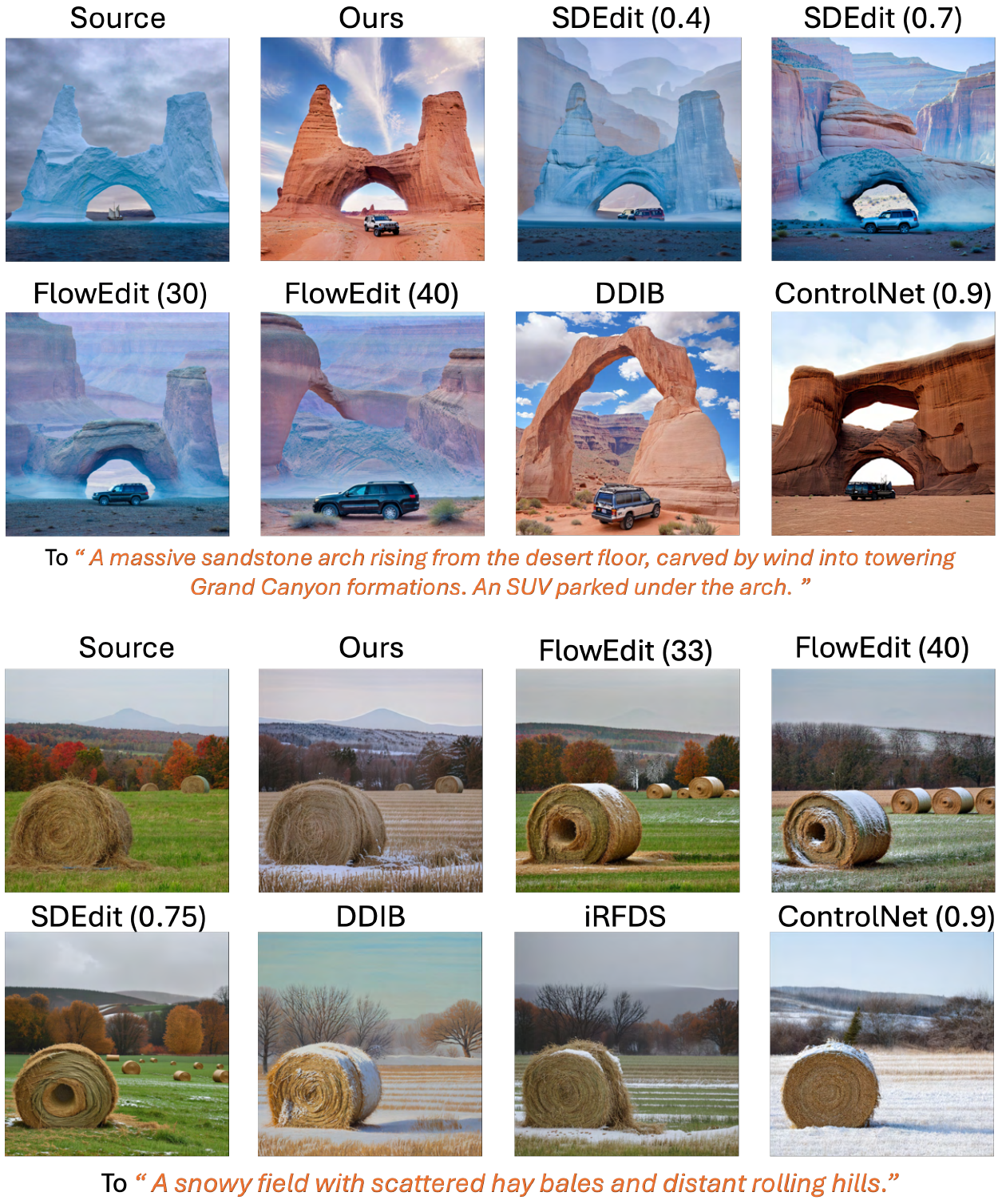}
    \vspace{-2em}
    \caption{\textbf{Additional Visualization Results.}
We present additional qualitative results for text-guided image scene editing using SD3-M~\cite{esser2024scaling}. Baseline methods are shown with numerical values in parentheses $(*)$, indicating the control strengths defined in their original formulations. For ControlNet, we use Canny edges as the conditioning signal.
For example, \emph{FlowEdit (33)} denotes a control parameter of $n_{\max}=33$, while \emph{ControlNet (0.7)} refers to a conditioning scale of $\alpha=0.7$, where larger values correspond to stronger Canny-based structural guidance.}
\vspace{-1em}    
    \label{fig:appendix_t2I_scene_final}
\end{figure*}

\newpage
\begin{figure*}[t!] 
    \centering
    \includegraphics[width=1.\linewidth]{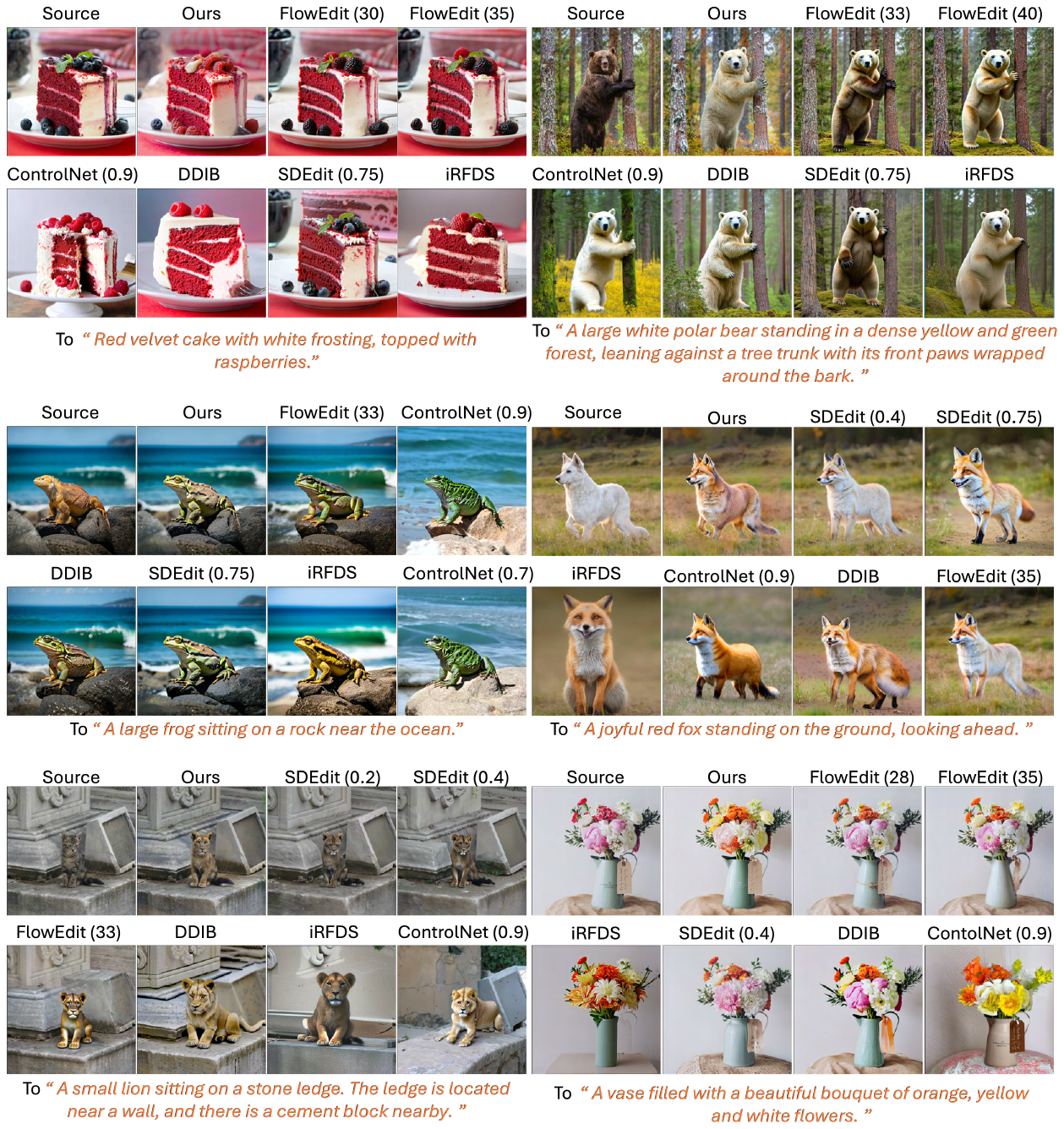}
    \vspace{-0.5em}
    \caption{\textbf{Additional Visualization Results.}
We present additional qualitative results for text-guided object editing using SD3-M~\cite{esser2024scaling}. Baseline methods are shown with numerical values in parentheses $(*)$, indicating the control strengths defined in their original formulations. For ControlNet, we use Canny edges as the conditioning signal.
For example, \emph{FlowEdit (33)} denotes a control parameter of $n_{\max}=33$, while \emph{ControlNet (0.7)} refers to a conditioning scale of $\alpha=0.7$, where larger values correspond to stronger Canny-based structural guidance.  }
\vspace{-1em}    
    \label{fig:appendix_t2I_object}
\end{figure*}

\newpage
\begin{figure*}[t!] 
    \centering
    \includegraphics[width=1.\linewidth]{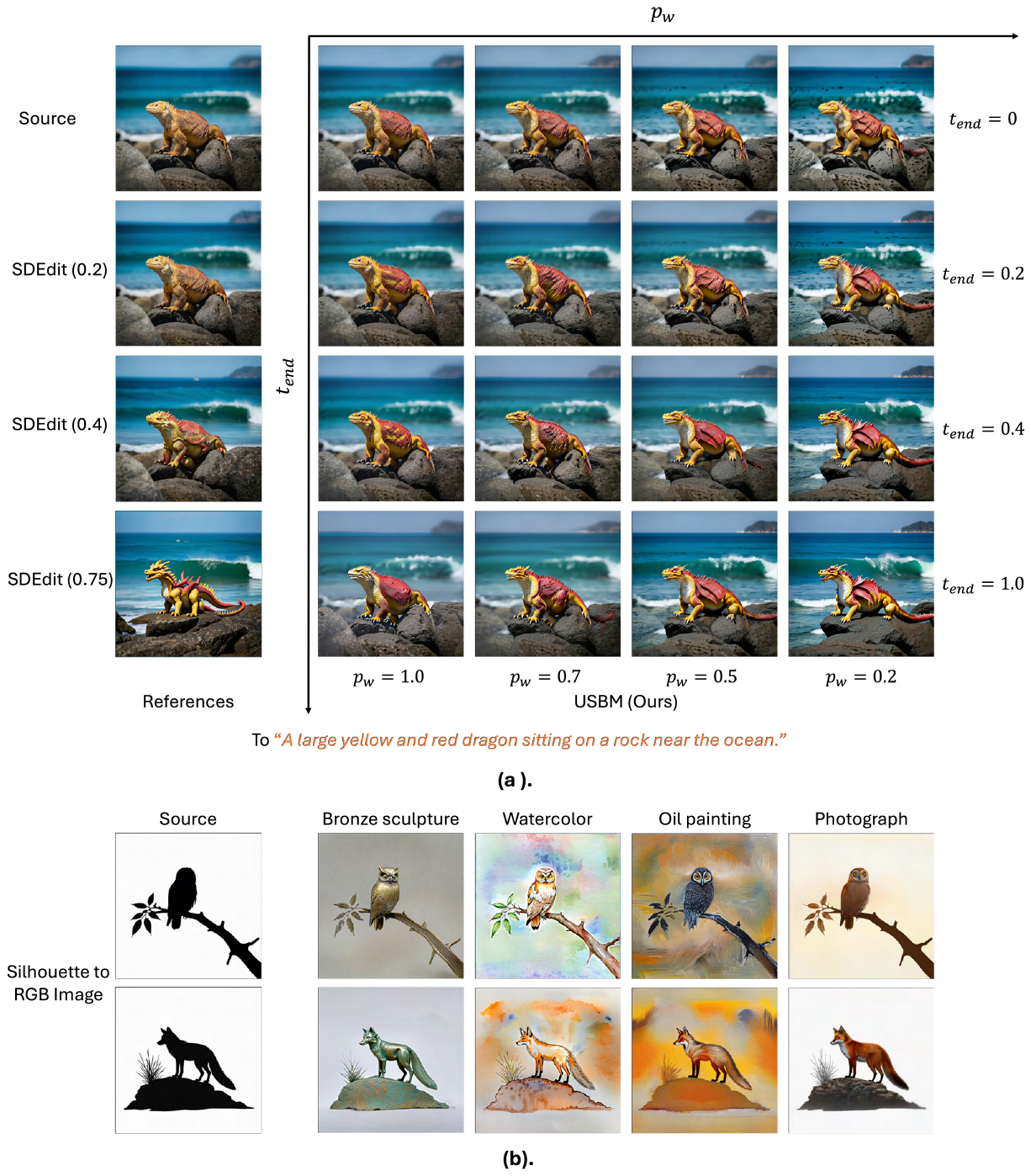}
    \vspace{-2em}
    \caption{\textbf{Limitations in Natural Image Editing and Translation.}\textbf{~\emph{Top}:} For large semantic changes (small lizard → large dragon), SSB trades off between preserving the source structure and fully following the target prompt, and strong edits can distort background geometry.\textbf{~\emph{Bottom}:} With highly abstract guidance such as silhouettes, the model tends to produce stylized, painterly animals rather than fully photorealistic translations.}
\vspace{-1em}    
    \label{fig:appendix_limitations}
\end{figure*}

\end{document}